  \providecommand\BibTeX{{%
    \normalfont B\kern-0.5em{\scshape i\kern-0.25em b}\kern-0.8em\TeX}}}
\newcommand{\eat}[1]{}
\newtheorem{theorem}{Theorem}
\newtheorem{theoremBrief}{Theorem}
\newtheorem{assumption}{Assumption}
\newtheorem{lemma}{Lemma}
\begin{document}
\title{GraphLoRA: Structure-Aware Contrastive Low-Rank Adaptation for Cross-Graph Transfer Learning}

\author{Zhe-Rui Yang}
\affiliation{%
  \institution{CSE, Sun Yat-sen University, Guangzhou, China}
  \city{}
  \country{}
}
\affiliation{%
  \institution{AI Thrust, HKUST(GZ), Guangzhou, China}
  \city{}
  \country{}
}
\affiliation{%
  \institution{Guangdong Key Laboratory of Big Data Analysis and Processing, Guangzhou, China}
  \city{}
  \country{}
}
\email{yangzhr9@mail2.sysu.edu.cn}

\author{Jindong Han}
\affiliation{%
  \institution{EMIA, HKUST, Hong Kong, China}
  \city{}
  \country{}
}
\affiliation{%
  \institution{AI Thrust, HKUST(GZ), Guangzhou, China}
  \city{}
  \country{}
}
\email{jhanao@connect.ust.hk}

\author{Chang-Dong Wang*}
\affiliation{%
  \institution{CSE, Sun Yat-sen University, Guangzhou, China}
  \city{}
  \country{}
}
\affiliation{%
  \institution{Guangdong Key Laboratory of Big Data Analysis and Processing, Guangzhou, China}
  \city{}
  \country{}
}
\email{changdongwang@hotmail.com}

\author{Hao Liu*}
\affiliation{%
  \institution{AI Thrust, HKUST(GZ), Guangzhou, China}
  \city{}
  \country{}
}
\affiliation{%
  \institution{CSE, HKUST, Hong Kong, China}
  \city{}
  \country{}
}
\email{liuh@ust.hk}

\thanks{* Corresponding author.}

\renewcommand{\shortauthors}{Yang et al.}

\begin{abstract}
Graph Neural Networks (GNNs) have demonstrated remarkable proficiency in handling a range of graph analytical tasks across various domains, such as e-commerce and social networks. Despite their versatility, GNNs face significant challenges in transferability, limiting their utility in real-world applications. Existing research in GNN transfer learning overlooks discrepancies in distribution among various graph datasets, facing challenges when transferring across different distributions. How to effectively adopt a well-trained GNN to new graphs with varying feature and structural distributions remains an under-explored problem. Taking inspiration from the success of Low-Rank Adaptation~(LoRA) in adapting large language models to various domains, we propose GraphLoRA, an effective and parameter-efficient method for transferring well-trained GNNs to diverse graph domains. Specifically, we first propose a Structure-aware Maximum Mean Discrepancy (SMMD) to align divergent node feature distributions across source and target graphs. Moreover, we introduce low-rank adaptation by injecting a small trainable GNN alongside the pre-trained one, effectively bridging structural distribution gaps while mitigating the catastrophic forgetting. Additionally, a structure-aware regularization objective is proposed to enhance the adaptability of the pre-trained GNN to target graph with scarce supervision labels. Extensive experiments on eight real-world datasets demonstrate the effectiveness of GraphLoRA against fourteen baselines by tuning only $20\%$ of parameters, even across disparate graph domains. The code is available at \url{https://github.com/AllminerLab/GraphLoRA}.
\end{abstract}

\begin{CCSXML}
<ccs2012>
<concept>
<concept_id>10010147.10010257.10010293.10010319</concept_id>
<concept_desc>Computing methodologies~Learning latent representations</concept_desc>
<concept_significance>500</concept_significance>
</concept>
<concept>
<concept_id>10002951.10003227.10003351</concept_id>
<concept_desc>Information systems~Data mining</concept_desc>
<concept_significance>500</concept_significance>
</concept>
</ccs2012>
\end{CCSXML}

\ccsdesc[500]{Computing methodologies~Learning latent representations}
\ccsdesc[500]{Information systems~Data mining}
\keywords{graph neural network, low-rank adaptation, transfer learning}


\maketitle

\section{Introduction}
\eat{Graph neural networks (GNNs) have shown remarkable effectiveness in dealing with different types of graph data~\cite{DBLP:conf/sigir/ZhouTHZW21, DBLP:conf/kdd/WangHZZZL18}, making them widely applied in various graph-based tasks such as node classification~\cite{DBLP:conf/icml/ChenWHDL20}, link prediction~\cite{DBLP:journals/corr/abs-2010-16103}, and graph classification~\cite{DBLP:conf/ijcai/ZhouSHZZH20}. However, despite their strong ability to capture complex relationships in graphs, GNNs often encounter challenges concerning cross-task and cross-graph transferability. 

Most previous research in graph transfer learning has focused on cross-task transfer~\cite{DBLP:conf/kdd/SunZHWW22, DBLP:conf/www/LiuY0023, DBLP:conf/kdd/SunCLLG23}, which involves transferring a pre-trained GNN to a downstream task different from the pretext task used during pre-training. For instance, GPPT~\cite{DBLP:conf/kdd/SunZHWW22} introduced the graph prompt technique to bridge the gap between pre-training with a link prediction task and the downstream node classification task. Nevertheless, compared to cross-task transfer, cross-graph transfer in graph neural networks may be more challenging.}

Graph Neural Networks (GNNs) have emerged as a powerful tool for processing and analyzing graph-structured data, demonstrating exceptional performance across diverse domains (e.g., e-commerce~\cite{DBLP:conf/sigir/ZhouTHZW21}, social networks~\cite{DBLP:conf/kdd/WangHZZZL18}, and recommendation~\cite{DBLP:conf/icdm/YangHWLLW22, DBLP:journals/kais/LaiYDLW24}) for diverse tasks, such as node classification~\cite{DBLP:conf/icml/ChenWHDL20}, link prediction~\cite{DBLP:journals/corr/abs-2010-16103}, and graph classification~\cite{DBLP:conf/ijcai/ZhouSHZZH20}.

\begin{figure}[!t]
  \centerline{\includegraphics[width=0.9\linewidth]{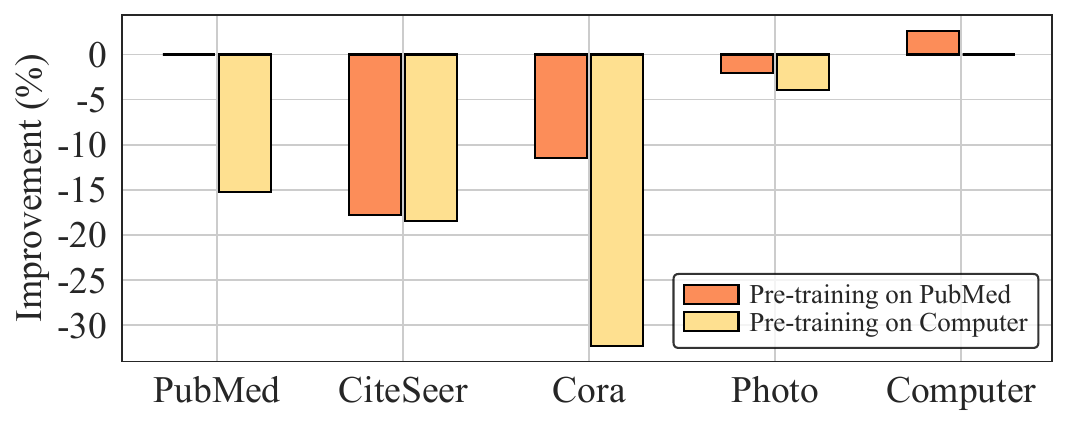}}
  \caption{Negative transfer occurs in cross-graph adaptation, where PubMed, CiteSeer, and Cora are citation networks, whereas Photo and Computer are co-purchase networks.}
  \label{fig:negative}
\end{figure}

Despite their superiority in capturing intricate graph relationships, GNNs heavily rely on graph labels, which are often insufficient in real-world scenarios~\cite{DBLP:conf/kdd/SunCLLG23}. Transfer learning is a common solution to address the issue of label sparsity~\cite{DBLP:journals/pieee/ZhuangQDXZZXH21}. However, GNNs face significant challenges in transfer learning due to substantial variations in underlying data distributions~\cite{DBLP:conf/kdd/CaoXYWZ0C023, DBLP:conf/nips/LiuLJ22}. 
Transferring a well-trained GNN to another graph typically yields suboptimal performance (i.e., negative transfer~\cite{DBLP:journals/corr/abs-2402-09834}) due to out-of-distribution issues. As depicted in \figurename~\ref{fig:negative}, negative transfer occurs in cross-graph adaptation, even when transferring between citation networks or co-purchase networks within the same domain.

To tackle this challenge, a common approach is to imbue GNNs with generalizable graph knowledge through the 'pre-train and fine-tune' paradigm~\cite{DBLP:conf/kdd/QiuCDZYDWT20, DBLP:conf/kdd/HanHAB21}. In this approach, it's crucial to integrate domain-specific knowledge while preserving the universal knowledge acquired during pre-training. Consequently, Parameter Efficient Fine-Tuning (PEFT) has garnered considerable attention for its ability to mitigate the risks of overfitting and catastrophic forgetting~\cite{DBLP:journals/corr/abs-2312-12148}. PEFT updates only a small portion of the parameters while keeping the remaining parameters frozen, thus mitigating the risk of forgetting the universal knowledge~\cite{DBLP:journals/corr/abs-2312-12148}.

For instance, Gui et al.~\cite{DBLP:conf/aaai/GuiYX24} propose a structure-aware PEFT method named G-Adapter, aimed at adapting pre-trained Graph Transformer Networks to various graph-based downstream tasks. Li et al.~\cite{DBLP:conf/aaai/LiH024} propose the AdapterGNN method, which applies the adapter to non-transformer GNN architectures. However, while these methods focus on incorporating PEFT into GNNs, they lack specific mechanisms to address discrepancies in distribution among different graphs, such as variations in node features and graph structures. As a result, they encounter challenges when applied to graphs with varying distributions. 

How to effectively adapt well-trained GNNs to graphs with different distributions remains an under-explored problem, posing a non-trivial task due to three major challenges.
(1)~\emph{Cross-graph feature discrepancy.} The node feature distributions between source and target graphs can vary significantly, impeding the transferability of pre-trained GNNs. 
For example, attributes in academic citation networks (e.g., PubMed~\cite{DBLP:conf/icml/YangCS16}) differ greatly from those in e-commerce co-purchase networks (e.g., Computer~\cite{DBLP:journals/corr/abs-1811-05868}).
(2)~\emph{Cross-graph structural discrepancy.} The structural characteristics of source and target graphs are often diverged. For instance, academic citation networks typically exhibit higher density and consists of more cyclic motifs compared to e-commerce networks~\cite{DBLP:journals/jsea/WuHL08, DBLP:journals/corr/abs-1012-4050}.
(3)~\emph{Target graph label scarcity.} 
\eat{The effectiveness of pre-trained GNNs often relies on sufficient labels on target graphs, which is often unavailable. How to effectively adapting pre-trained GNNs to target graphs with limited labels is another challenge.}
The effectiveness of pre-trained GNNs often relies on sufficient labels on target graphs, which are not always available in the real-world. For instance, in social networks, typically only a small fraction of high-degree nodes are labeled~\cite{DBLP:journals/corr/abs-2401-10394}. 

To this end, in this paper, we present GraphLoRA, a structure-aware low-rank contrastive adaptation method for effective transfer learning of GNNs in cross-graph scenarios.
Specifically, we first introduce a Structure-aware Maximum Mean Discrepancy~(SMMD) to minimize the feature distribution discrepancy between source and target graphs.
Moreover, inspired by the success of Low-Rank Adaptation~(LoRA)~\cite{DBLP:conf/iclr/HuSWALWWC22} in adapting large language models to diverse natural language processing tasks and domains~\cite{DBLP:conf/emnlp/LongWP23, DBLP:conf/recsys/BaoZZWF023}, we construct a small trainable GNN alongside the pre-trained one coupled with a tailor-designed graph contrastive loss to mitigate structural discrepancies.
Additionally, we develop a structure-aware regularization objective by harnessing local graph homophily to enhance the model adaptability with scarce labels in the target graph.

The main contributions of this work are summarized as follows:
\begin{itemize}
  \item We propose a novel strategy for measuring feature distribution discrepancy in graph data, which incorporates the graph structure into the measurement process.
  \item We propose GraphLoRA, a novel method tailored for cross-graph transfer learning. The low-rank adaptation network, coupled with graph contrastive learning, efficiently incorporates structural information from the target graph, mitigating catastrophic forgetting and addressing structural discrepancies across graphs. Furthermore, we theoretically demonstrate that GraphLoRA possesses robust representational capabilities, facilitating effective cross-graph transfer.
  \item We propose a structure-aware regularization objective to enhance the adaptability of pre-trained GNN to target graphs, particularly in contexts with limited label availability.
  \item Extensive experiments on real-world datasets demonstrate the effectiveness of our proposed method by tuning a small fraction of parameters, even cross disparate graph domains.
\end{itemize}

\section{Related Work}
\subsection{Graph Transfer Learning}
Graph transfer learning involves pre-training a GNN and applying it to diverse tasks or datasets. Common techniques in graph transfer learning include multi-task learning~\cite{DBLP:conf/nips/HwangPKKHK20, DBLP:conf/kdd/HanHAB21}, multi-network learning~\cite{DBLP:conf/kdd/Jiang21, DBLP:conf/www/NiCLCCX018}, domain adaptation~\cite{DBLP:journals/tkde/DaiWXSW23, DBLP:conf/icml/LiuLFTZQL23}, and pre-train fine-tune approaches~\cite{DBLP:conf/kdd/QiuCDZYDWT20, DBLP:conf/ijcai/ZhangXHRB22, DBLP:conf/aaai/LiH024, DBLP:journals/corr/abs-2310-07365}. However, multi-task learning, multi-network learning and domain adaptation are typically employed for cross-task transfer or necessitate direct relationships between the source and target graphs, which is not applicable to our problem~\cite{DBLP:conf/kdd/HanHAB21, DBLP:conf/www/NiCLCCX018, DBLP:conf/icml/LiuLFTZQL23}. Therefore, we focus on pre-train and fine-tune techniques, involving pre-training a GNN on the source graph and subsequently fine-tuning it on the target graph. For instance, GCC~\cite{DBLP:conf/kdd/QiuCDZYDWT20}, GCOPE~\cite{DBLP:journals/corr/abs-2402-09834}, and GraphFM~\cite{lachi2024graphfm} focus on pre-training to develop more general GNNs. In contrast, DGAT~\cite{DBLP:journals/corr/abs-2402-08228} is dedicated to designing architectures that perform better in out-of-distribution scenarios. GTOT~\cite{DBLP:conf/ijcai/ZhangXHRB22}, AdapterGNN~\cite{DBLP:conf/aaai/LiH024}, and GraphControl~\cite{DBLP:journals/corr/abs-2310-07365} focus on fine-tuning, aiming to adapt pre-trained GNNs to various graphs. Most relevant to our work is GraphControl, which freezes the pre-trained GNN and utilizes information from the target graph as a condition to fine-tune the newly added ControlNet for cross-domain transfer. In contrast to GraphControl, our method aligns the node feature distributions to facilitate the transfer of the pre-trained GNN, rather than using node attributes as conditions. Additionally, we leverage graph contrastive learning to facilitate knowledge transfer and utilize graph structure knowledge to enhance the adaptability of the pre-trained GNN.

\begin{figure*}[!t]
  \centerline{\includegraphics[width=0.9\linewidth]{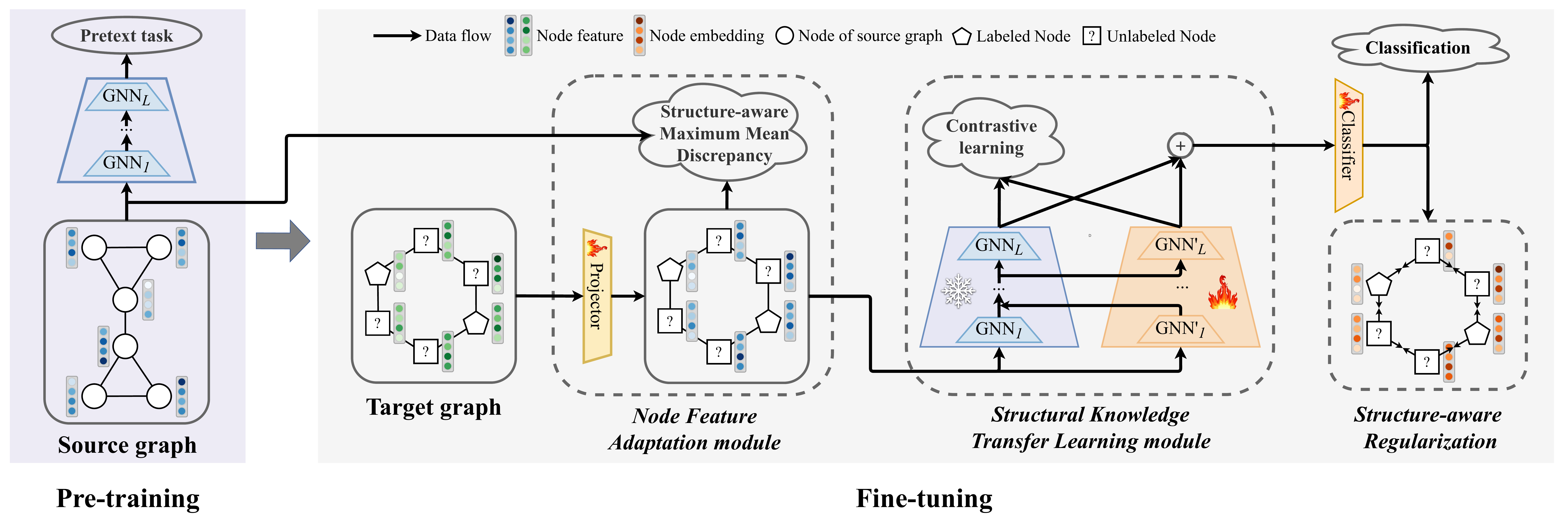}}
  \caption{The framework of GraphLoRA: Fine-tuning the pre-trained GNN for the target graph. The node feature adaptation and structural knowledge transfer learning modules are designed to alleviate feature and structural discrepancies, respectively. Furthermore, the structure-aware regularization objective is crafted to enhance the adaptability of the pre-trained GNN.}
  \label{fig:framework}
\end{figure*}

\subsection{Parameter-Effcient Fine-Tuning}
"Pre-train and fine-tune" has emerged as the predominant paradigm in transfer learning~\cite{DBLP:conf/kdd/SunCLLG23}. Despite its success, full fine-tuning is frequently inefficient and susceptible to challenges like overfitting and catastrophic forgetting~\cite{DBLP:conf/aaai/GuiYX24, DBLP:conf/aaai/LiH024, DBLP:journals/corr/abs-2312-12148}. In recent years, PEFT has emerged as an alternative, effectively mitigating these issues while achieving comparable performance~\cite{DBLP:journals/corr/abs-2312-12148}. PEFT methods update only a small portion of parameters while keeping the remaining parameters frozen. For instance, Adapter tuning~\cite{DBLP:conf/icml/HoulsbyGJMLGAG19, DBLP:conf/emnlp/LinMF20} inserts trainable adapter modules into the model, while Prompt tuning~\cite{DBLP:conf/emnlp/LesterAC21, DBLP:conf/acl/LiL20, DBLP:journals/corr/abs-2103-10385} inserts trainable parameters into the input or hidden states of the model. BitFit~\cite{DBLP:conf/acl/ZakenGR22} updates only the bias terms in the model, while LoRA~\cite{DBLP:conf/iclr/HuSWALWWC22} applies low-rank decomposition to reduce the number of trainable parameters. Recently, some efforts have been made to introduce PEFT into GNNs. For example, methods like GPPT~\cite{DBLP:conf/kdd/SunZHWW22}, GPF~\cite{DBLP:conf/nips/FangZYWC23}, GraphPrompt~\cite{DBLP:conf/www/LiuY0023}, and ProG~\cite{DBLP:conf/kdd/SunCLLG23} utilize prompt tuning to adapt pre-trained GNNs across diverse downstream tasks. G-Adapter~\cite{DBLP:conf/aaai/GuiYX24} adapts pre-trained Graph Transformer Networks for various graph-based downstream tasks, whereas AdapterGNN~\cite{DBLP:conf/aaai/LiH024} applies adapters to non-transformer GNN architectures.

\section{Preliminaries}
\subsection{Notations and Background}
In this paper, we utilize the notation $\mathcal{G}=\left( \mathcal{V}, \mathcal{E} \right)$ to denote a graph, where $\mathcal{V} = \{v_1, v_2, \cdots, v_N \}$ represents the node set with $N$ nodes in the graph, and $\mathcal{E} = \{(v_i, v_j)|v_i, v_j \in \mathcal{V}\}$ represents the edge set in the graph. The feature matrix is denoted as $\mathbf{X} \in \mathbb{R}^{N \times d}$, where $\mathbf{x}_i \in \mathbb{R}^d$ represents the node feature of $v_i$, and $d$ is the dimension of features. Furthermore, the adjacency matrix of the graph is denoted as $\mathbf{A} \in \{0, 1\}^{N \times N}$, where $\mathbf{A}_{i,j} = 1$ iff $(v_i, v_j) \in \mathcal{E}$. To avoid confusion, we employ the superscripts $s$ and $t$ in this paper to distinguish between the source and target graphs.  

\textbf{Graph neural networks}. A major category of GNNs is message-passing neural networks (MPNNs)~\cite{DBLP:conf/icml/GilmerSRVD17}. MPNNs follow the "propagate-transform" paradigm, which can be described as follows:
\begin{align}
    \bar{\boldsymbol{h}}_{s}^{l}&=\mathrm{Propagate}_l\left( \left\{ \boldsymbol{h}_{t}^{l-1}|v_t\in \mathcal{N} \left( v_s \right) \right\} \right), \\
    \boldsymbol{h}_{s}^{l}&=\mathrm{Transform}_l\left( \boldsymbol{h}_{s}^{l-1},\bar{\boldsymbol{h}}_{s}^{l} \right),
\end{align}
where $\mathcal{N} \left( v_s \right)$ denotes the neighboring node set of node $v_s$, $\boldsymbol{h}_{s}^{l}$ represents the node embedding of node $v_s$ in the $l$-th layer, and $\bar{\boldsymbol{h}}_{s}^{l}$ denotes the aggregated representation from neighboring nodes. 

\textbf{Low-Rank Adaptation~(LoRA)}.
LoRA~\cite{DBLP:conf/iclr/HuSWALWWC22} is a widely used PEFT methods, designed to efficiently fine-tune LLMs across tasks and domains. Compared to the Adapter method, LoRA provides better performance without introducing additional inference latency~\cite{DBLP:conf/iclr/HuSWALWWC22}.
Specifically, for a pre-trained weight matrix $\boldsymbol{W}_{0} \in \mathbb{R}^{m \times n}$, its weight update is expressed through a low-rank decomposition, given by $\boldsymbol{W}_0+\bigtriangleup \boldsymbol{W}=\boldsymbol{W}_0+\boldsymbol{BA}$, where $\boldsymbol{B} \in \mathbb{R}^{m \times r}$, $\boldsymbol{A} \in \mathbb{R}^{r \times n}$, and the rank $r\ll \min \left( m,n \right) $. During fine-tuning, the pre-trained weight matrix $\boldsymbol{W}_{0}$ is frozen, while $\boldsymbol{B}$ and $\boldsymbol{A}$ are tunable. 
The low-rank adaptation strategy effectively reduces the number of parameters requiring fine-tuning while maintaining high model quality without introducing inference latency. Notably, by sharing the vast majority of model parameters, it enables quick task switching and allows the pre-trained model to be applied to various tasks and domains.

\subsection{Problem Statement}
\eat{In this paper, we assume that a GNN has already been pre-trained on the source graph, utilizing methods such as graph contrastive learning, graph autoencoders, and so on~\cite{DBLP:journals/pami/XieXZWJ23, DBLP:journals/tkde/LiuJPZZXY23}. The objective is to fine-tune this pre-trained GNN on the target graph, with the goal of achieving superior performance on the target graph. Specifically, we focus on the task of node classification on graphs in this study.

Formally, assuming the pre-trained GNN is $g_{\theta}$ with parameters $\theta$. The node classifier $f_{\varTheta}\left( \cdot \right) =h_{\varTheta}\circ g_{\theta}\left( \cdot \right) $ is utilized to classify the nodes of the target graph $\mathcal{G}^t$, where $h_{\varTheta}$ is the tunable module, and $\varTheta$ are tunable parameters. The optimization goal for the node classifier fine-tuning is as follows:
\begin{align}
    f_{\varTheta}^{\star}=\underset{\varTheta}{\text{argmin}}\,\,\mathcal{L} \left( f_{\varTheta}\left( \mathbf{X}^t, \mathbf{A}^t \right) , Y_{train} \right),  
\end{align}
where $Y_{train}$ represents training labels, and $\mathcal{L}$ denotes the loss function of the fine-tuning stage. Finally, the optimal node classifier $f_{\varTheta}^{\star}$ is used to classify the testing nodes on the target graph.}

Given a GNN $g_{\theta}$ pre-trained on the source graph $\mathcal{G}^s$, our goal is to obtain an optimal GNN $f_{\varTheta}^{\star}$ for the target graph $\mathcal{G}^t$,

\begin{align}
    f_{\varTheta}^{\star}=\underset{\varTheta}{\text{argmin}}\,\,\mathcal{L} \left( f_{\varTheta}\left( \mathbf{X}^t, \mathbf{A}^t \right) , Y_{train} \right),  
\end{align}
where $Y_{train}$ denotes training labels, and $\mathcal{L}$ is the model tuning loss function. $f_{\varTheta}\left( \cdot \right) =h_{\varTheta}\circ g_{\theta}\left( \cdot \right)$, with $h_{\varTheta}$ as the tunable module and $g_{\theta}$ frozen. We focus on the node classification task in this work.

\section{Methodology}
\subsection{Framework Overview}
The overall framework of the model is illustrated in~\figurename~\ref{fig:framework}.
Firstly, a node feature adaptation module is designed to perform feature mapping on the target graph. Within this module, we propose a Structure-aware Maximum Mean Discrepancy strategy to minimize discrepancy in node feature distributions between the source and target graphs. After that, we introduce a structural knowledge transfer learning module to mitigate structural disparity. Taking inspiration from LoRA, we apply low-rank adaptation to the pre-trained GNN. This can be seen as incorporating an additional GNN $g_{\phi}^{'}$ with the same architecture, but utilizing the weight updates as its parameters. Additionally, we utilize graph contrastive learning to facilitate knowledge transfer. During this process, we freeze the weights of $g_{\theta}$ and fine-tune $g_{\phi}^{'}$. To enhance the adaptability of the pre-trained GNN to scenarios with scarce labels, a structure-aware regularization objective is proposed to effectively leverage the structural information of the target graph. Finally, we employ multi-task learning to optimize multiple objectives.

\subsection{Node Feature Adaptation}
Previous works on transfer learning have suggested that minimizing the discrepancy in feature distributions between the source and target datasets is crucial for effective knowledge transfer~\cite{DBLP:journals/ml/Ben-DavidBCKPV10, DBLP:journals/pieee/ZhuangQDXZZXH21}. To achieve this, a projector is designed to perform feature mapping on the node features of the target graph, as follows: 
\begin{align}
\boldsymbol{z}_{i}^{t}=p\left( \boldsymbol{x}_{i}^{t};\boldsymbol{\omega} \right) ,   
\end{align}
where $p\left( \cdot; \boldsymbol{\omega} \right)$ is the projector with parameters $\boldsymbol{\omega}$, and $\boldsymbol{z}_{i}^{t} \in \mathbb{R}^{d^s}$.

To optimize the projector, our goal is to minimize the discrepancy in feature distributions between $\boldsymbol{x}^{s}$ and $\boldsymbol{z}^{t}$. In the realm of domain adaptation, a commonly employed metric to quantify the dissimilarity between two probability distributions is the Maximum Mean Discrepancy (MMD)~\cite{DBLP:conf/ismb/BorgwardtGRKSS06, DBLP:journals/jmlr/GrettonBRSS12}. 
The fundamental principle underlying MMD is to measure the dissimilarity between two probability distributions by comparing their means in a high-dimensional feature space. Specifically, it can be expressed as follows:

\begin{equation}
\begin{split}
    \mathcal{L} _{mmd}= &
    \small{\frac{1}{\left( N^t \right) ^2}\sum_{i=1}^{N^t}{\sum_{i'=1}^{N^t}{k\left( \boldsymbol{z}_{i}^{t},\boldsymbol{z}_{i'}^{t} \right)}}} - 
    \small{\frac{2}{N^tN^s}\sum_{i=1}^{N^t}{\sum_{j=1}^{N^s}{k\left( \boldsymbol{z}_{i}^{t},\boldsymbol{x}_{j}^{s} \right)}}} \\ + &
    \small{\frac{1}{\left( N^s \right) ^2}\sum_{j=1}^{N^s}{\sum_{j'=1}^{N^s}{k\left( \boldsymbol{x}_{j}^{s},\boldsymbol{x}_{j'}^{s} \right)}}}, 
\end{split}
\end{equation}
where $k(\cdot, \cdot)$ denotes the kernel function. 

For the optimization of $\mathcal{L} _{mmd}$, we can observe that the first term of $\mathcal{L} _{mmd}$ maximizes the distance between node features in the target graph, while the second term minimizes the distance between node features of the source and target graphs. The third term denotes a constant. However, the node features in graph data are not independently and identically distributed ($i.i.d.$), $i.e.$, exhibiting correlation with the graph structure. For instance, neighboring nodes tend to exhibit similar features, which is overlooked by $\mathcal{L} _{mmd}$. Therefore, it is crucial to consider the graph structure when minimizing the discrepancy in feature distributions between $\boldsymbol{x}^{s}$ and $\boldsymbol{z}^{t}$. This aids in retaining the structural information in node features, such as homophily, during feature mapping.

Specifically, we introduce Structure-aware Maximum Mean Discrepancy (SMMD), which incorporates graph structure into the measurement of distribution discrepancy. In particular, smaller weights are assigned to node pairs with stronger connections, and larger weights are assigned to node pairs with weaker connections.

First, it is crucial to quantify the strength of relationships between node pairs. Since the adjacency matrix only provides a local perspective on the graph structure~\cite{DBLP:conf/icml/HassaniA20}, we utilize the graph diffusion technique to transform the adjacency matrix into a diffusion matrix~\cite{DBLP:conf/nips/KlicperaWG19, DBLP:conf/kdd/LiWXL23}. The diffusion matrix allows us to evaluate the strength of relationships between node pairs from a global perspective, facilitating the discovery of potential connections between node pairs and preserving them during feature mapping. Specifically, the diffusion matrix is defined as: 
\begin{align}
    \boldsymbol{S}=\sum_{r=0}^{\infty}{\psi _r\boldsymbol{T}^r}. 
\end{align}
where $\boldsymbol{T}$ represents the transition matrix, which is related to the adjacency matrix $\boldsymbol{A}^t$, and $\psi_r$ represents the weight coefficient. We utilize a popular variant of the diffusion matrix, Personalized PageRank (PPR)~\cite{page1998pagerank}, which employs $\boldsymbol{T}=\boldsymbol{A}^{t}\boldsymbol{D}^{-1}$ and $\psi _r=\alpha \left( 1-\alpha \right) ^r$, where $\boldsymbol{D}$ denotes the diagonal degree matrix, i.e. $\boldsymbol{D}_{i,i}=\sum_{j=1}^{N^t}{\boldsymbol{A}_{i,j}^{t}}$, and $\alpha \in \left( 0,1 \right) $ represents the teleport probability. The elements $\boldsymbol{S}_{i,j}$ in the obtained diffusion matrix $\boldsymbol{S}$ indicate the strength of relationships between node $v_i^t$ and node $v_j^t$. For PPR, the diffusion matrix has the closed-form expression:
\begin{align}
    S=\alpha(I-(1-\alpha)D^{-1/2}AD^{-1/2})^{-1}.
\end{align}

Finally, we define the Structure-aware Maximum Mean Discrepancy loss function as follows:
\begin{align}
    \mathcal{L}_{smmd} = & \frac{1}{\sum_{i=1}^{N^t}\sum_{i'=1}^{N^t}\gamma_{i,i'}}\sum_{i=1}^{N^t}\sum_{i'=1}^{N^t}\gamma_{i,i'}k\left(\boldsymbol{z}_{i}^{t},\boldsymbol{z}_{i'}^{t}\right) \\
    & - \frac{2}{N^tN^s}\sum_{i=1}^{N^t}\sum_{j=1}^{N^s}k\left(\boldsymbol{z}_{i}^{t},\boldsymbol{x}_{j}^{s}\right) + \frac{1}{(N^s)^2}\sum_{j=1}^{N^s}\sum_{j'=1}^{N^s}k\left(\boldsymbol{x}_{j}^{s},\boldsymbol{x}_{j'}^{s}\right), \nonumber \\
    \gamma_{i,i'} = & \log\left(1+\frac{1}{\boldsymbol{S}_{i,i'}}\right),
\end{align}
where $\mathcal{L} _{smmd}$ incorporates graph structure during computation.

\subsection{Structural Knowledge Transfer Learning}

Recent study~\cite{DBLP:conf/kdd/CaoXYWZ0C023} suggests that the disparity in graph structure between the source and target graphs impedes the transferability of pre-trained GNNs. 
Straightforward approaches such as full parameter fine-tuning of pre-trained GNNs may also lead to additional issues such as catastrophic forgetting~\cite{DBLP:conf/eacl/PfeifferKRCG21}.
Consequently, effectively transferring the pre-trained GNN to target graphs becomes a formidable challenge when there is a significant discrepancy in graph structure. 

Drawing inspiration from the success of LoRA across various tasks and domains, we propose incorporating adaptation for pre-trained weights, as depicted in \figurename~\ref{fig:framework}. During fine-tuning, we freeze the pre-trained weights while allowing newly added parameters to be tunable. From another perspective, it can be seen as maintaining the network architecture and parameters of the pre-trained GNN while introducing an additional GNN with the same architecture and utilizing the weight updates as its parameters.

Formally, let $g_{\phi}^{'}$ represents the newly added GNN, $GNN_l\left( \cdot;\boldsymbol{W}^l \right)$ and $GNN_{l}^{'}\left( \cdot;\bigtriangleup \boldsymbol{W}^l \right)$ denote the $l$-th layer of $g_{\theta}$ and $g_{\phi}^{'}$, respectively, where $\boldsymbol{W}^l,\bigtriangleup \boldsymbol{W}^l\in \mathbb{R}^{d^{l-1}\times d^l}$ are parameter matrices. The output of GNN at the $l$-th layer is modified from $\boldsymbol{H}^l=GNN_l\left( \boldsymbol{H}^{l-1};\boldsymbol{W}^l \right)$ to $\boldsymbol{H}^l=GNN_l\left( \boldsymbol{H}^{l-1};\boldsymbol{W}^l \right) +GNN_{l}^{'}\left( \boldsymbol{H}^{l-1};\bigtriangleup \boldsymbol{W}^l \right)$, where $\boldsymbol{H}^0=\boldsymbol{Z}^t$, and $\boldsymbol{Z}^t$ represents the feature-mapped node feature matrix. Let $\boldsymbol{H}=g_{\theta}\left( \boldsymbol{Z}^t \right)$ and $\boldsymbol{H}^{'}=g_{\phi}^{'}\left( \boldsymbol{Z}^t \right)$ represent the output of $g_{\theta}$ and $g_{\phi}^{'}$, respectively.
We apply low-rank decomposition to the weight update to reduce the number of tunable parameters. Specifically, the output of the $l$-th layer of GNN is represented as follows:
\begin{align}
    \boldsymbol{H}^l=GNN_l\left( \boldsymbol{H}^{l-1};\boldsymbol{W}^l \right) +GNN_{l}^{'}\left( \boldsymbol{H}^{l-1};\boldsymbol{W}_B^l \boldsymbol{W}_A^l \right),
\end{align}
where $\boldsymbol{W}_B^l \in \mathbb{R}^{d^{l-1}\times r}$, $\boldsymbol{W}_A^l \in \mathbb{R}^{r \times d^l}$, and the rank $r\ll \min \left( d^{l-1},d^{l} \right) $.

The advantages of the above design are two-fold. First, the pre-trained GNN preserves general structural knowledge from the source graph, while the newly added one serves to incorporate specific structural information from the target graph, jointly facilitating downstream tasks. Keeping the pre-trained weights frozen during fine-tuning also mitigates the issue of catastrophic forgetting. Second, since the target graph may suffer from label scarcity, the low-rank decomposition reduces the number of tunable parameters to update and thus mitigates potential overfitting issues.

Unlike vanilla LoRA, which fine-tunes the network based on downstream task objectives, we further introduce graph contrastive learning to facilitate structural knowledge transfer. Specifically, we consider the embeddings obtained by two GNNs (frozen and tunable ones) for the same node as positive samples, while treating the embeddings for different nodes as negative samples. Furthermore, to enhance the learning effectiveness of node embeddings, we incorporate label information into graph contrastive learning. This involves treating the embeddings of nodes belonging to the same category as positive samples and considering the embeddings of nodes from different categories as negative samples.

Formally, the graph contrastive learning loss~\cite{DBLP:journals/corr/abs-1807-03748} is defined as
\begin{small}
\begin{align}
    \mathcal{L} _{cl}=-\sum_{i=1}^{N^t}{\sum_{y_i=y_k}{\log \frac{e^{\rho \left( \boldsymbol{h}_i,\boldsymbol{h}_{i}^{'} \right) /\tau}+\varepsilon e^{\rho \left( \boldsymbol{h}_i,\boldsymbol{h}_{k}^{'} \right) /\tau}}{e^{\rho \left( \boldsymbol{h}_i,\boldsymbol{h}_{i}^{'} \right) /\tau}+\sum_{j\ne i}{e^{\rho \left( \boldsymbol{h}_i,\boldsymbol{h}_{j}^{'} \right) /\tau}}+\sum_{j\ne i}{e^{\rho \left( \boldsymbol{h}_i,\boldsymbol{h}_{j} \right) /\tau}}}}},
\end{align}
\end{small}
where $y_i$ is the category of node $v_i^t$, and $\varepsilon \in (0, 1)$ is the weight.
The low-rank adaptation network coupled with tailor-designed graph contrastive learning incorporates structural information from the target graph, maximizing the mutual information between the pre-trained GNN and the newly added GNN. Therefore, such a strategy mitigates structural discrepancies across graphs, facilitating the adaptation of pre-trained GNNs to target graphs.

We further provide theoretical justification for the robust representation capability of pre-trained GNNs with low-rank adaptation.

\begin{theoremBrief}
\label{theorem_B1}
Let $\overline{g}$ be a target GNN with $\overline{L}$ layers and $g_0$ be an arbitrary frozen GNN with $L$ layers, where $\overline{L}\leqslant L$. Under mild conditions on ranks and network architectures, there exist low-rank adaptations such that the low-rank adapted model $g_0$ becomes exactly equal to $\overline{g}$.
\end{theoremBrief}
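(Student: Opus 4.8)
The plan is to peel the statement down to a layer-wise matrix factorization fact, in two stages: first match $\overline g$ exactly in the equal-depth case $\overline L=L$, then reduce the general case $\overline L\le L$ to that one by padding $g_0$ with layers that act as the identity.

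First I would pin down the regime in which ``exactly equal'' is attainable: take each layer to be linear in its weights, $GNN_l(H;W)=P_l(H)\,W$ (or $H W_1 + P_l(H) W_2$ for architectures carrying a self term), where $P_l$ is the fixed, parameter-free propagation of layer $l$, and assume $g_0$ and $\overline g$ share this propagation scheme --- this is the substance of the ``mild conditions on network architectures.'' Under this form, linearity gives the key reduction: the adapted layer $GNN_l(H;W^l)+GNN_l'(H;W_B^l W_A^l)$ equals $P_l(H)\bigl(W^l+W_B^l W_A^l\bigr)$, i.e. a layer of the same type whose effective weight is $W^l+W_B^l W_A^l$. Thus the low-rank-adapted $g_0$ is, as a function, just the frozen architecture with each $W^l$ replaced by $W^l+W_B^l W_A^l$, and the task becomes: choose low-rank $W_B^l W_A^l$ realizing the effective weights of $\overline g$.

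For $\overline L=L$, set $W_B^l W_A^l=\overline W^l-W^l$ at every layer; any matrix of rank at most $r$ factors as such a product (e.g. via its thin SVD), so the construction goes through as soon as $r\ge\mathrm{rank}(\overline W^l-W^l)$ for all $l$, and in particular whenever $r$ is at least the width of the narrower incident layer at each $l$ --- the ``mild condition on ranks.'' An induction on $l$ then yields $H^l=\overline H^l$ for all $l$, hence functional equality. For the general case I would use the last $\overline L$ layers of $g_0$ to carry $\overline g$'s computation as above and force the first $L-\overline L$ layers to compute the identity; under the assumed architecture this is done by driving the effective neighbor-weight to $0$ and the effective self-weight to $I$, again a low-rank correction provided $r$ is large enough, and requiring the first $L-\overline L$ widths of $g_0$ to equal the input width and $d^{L-\overline L+k}=\overline d^{\,k}$ thereafter --- dimension-compatibility conditions folded into ``mild conditions.'' Composing identity layers with the copy of $\overline g$ gives the claim.

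The step I expect to be the real obstacle --- and where the hypotheses earn their keep --- is the tension between the additive parallel branch and anything nonlinear, together with the depth gap: the reduction ``parallel sum $=$ single layer with summed weights'' is exact only for weight-linear layers (with ReLU one cannot cancel the frozen branch, whose output is sign-constrained), and spending the surplus $L-\overline L$ layers needs the architecture to admit identity-acting layers of matching dimensions. Once one restricts to propagation-consistent linear GNNs with compatible widths and adequately large ranks, what remains is the routine SVD-style factorization and the layerwise induction.
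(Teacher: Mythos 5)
There is a genuine gap: your proof establishes the claim only for GNNs that are \emph{linear} in the activations, whereas the precise version of this theorem in the paper's appendix (Theorem~2 there, stated for the architecture $\boldsymbol{H}^l=\mathrm{ReLU}(\boldsymbol{W}^l\boldsymbol{H}^{l-1}\boldsymbol{P}+\boldsymbol{B}^l)$) keeps the ReLU. You correctly identify the nonlinearity as the obstacle to your ``parallel sum $=$ summed weights'' reduction, but then resolve it by assuming it away; the paper instead defines the adapted model with merged weights $\boldsymbol{W}^l+\bigtriangleup\boldsymbol{W}^l$ \emph{inside} the ReLU (so the merge is trivial) and imports the construction of Zeng and Lee (arXiv:2310.17513) for ReLU networks, in which the biases are used to keep intermediate activations in the linear regime so that products of adapted weight matrices can be matched to the target weights.

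The second, more substantive divergence is how the depth surplus $L-\overline{L}$ is spent, and this is where your rank condition stops being ``mild.'' You use the extra layers as identity maps and then require $r\geqslant\mathrm{rank}(\overline{\boldsymbol{W}}^l-\boldsymbol{W}^l)$ at each matched layer; for an \emph{arbitrary} frozen $g_0$ and target $\overline{g}$ this rank is generically the full width $D$ (and your identity layers need an effective update $I-\boldsymbol{W}^l$, also generically full rank), so the statement you prove is essentially that full-rank adaptation can reproduce any target --- which is true but says nothing about \emph{low-rank} adaptation. The paper's proof partitions the $L$ frozen layers into $\overline{L}$ consecutive blocks of size $M=\lfloor L/\overline{L}\rfloor$ and has each block jointly emulate one target layer, distributing the correction so that rank $R\geqslant\lceil\max_i\mathrm{rank}(\overline{\boldsymbol{W}}^i-\prod_{l\in P_i}\boldsymbol{W}^l)/M\rceil$ per layer suffices (under a non-singularity assumption that holds with probability~1 for generic weights). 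In other words, the surplus depth is precisely what buys the reduction in rank; using it for identity layers forfeits the theorem's content. To repair your argument you would need to adopt the grouped-block construction (or cite the FNN result and check, as the paper does, that the fixed propagation $\boldsymbol{H}^{l-1}\boldsymbol{P}$ and the matrix-valued input do not disturb it).
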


The proof of Theorem~\ref{theorem_B1}, along with additional theoretical analysis, are provided in~\appendixname~\ref{appendix:theory}. The theorem suggests that effective cross-graph transfer, i.e., $g^\star$ achieves an optimal solution, can be accomplished through low-rank adaptation applied to the pre-trained GNN $g_0$, thereby equating $g_0$ with $g^\star$.

\subsection{Structure-aware Regularization}

In real-world scenarios, the homophily phenomenon is prevalent in graph data, such as citation networks or co-purchase networks~\cite{DBLP:conf/kdd/LiWXL23a}. 
In general, homophily reflects the tendency for "like to attract like"~\cite{doi:10.1146/annurev.soc.27.1.415}, indicating that connected nodes are prone to sharing similar labels~\cite{DBLP:conf/kdd/LiWXL23a, DBLP:journals/corr/abs-1811-05868}.
In the cross-graph transfer learning context, we leverage the homophily principle to alleviate label scarcity in the target graph. 

Inspired by previous work~\cite{DBLP:journals/corr/abs-2112-00955}, we propose a structure-aware regularization objective based on the homophily principle of graph data. Specifically, we assume that the predicted label vectors of connected neighbors on the target graph are similar, while those of disconnected neighbors are dissimilar. In contrast to GraphSage~\cite{DBLP:conf/nips/HamiltonYL17}, we utilize direct connected neighbors instead of random walk to better satisfy the assumption, which can be formulated as: 
\begin{equation}
\begin{split}
    \mathcal{L}_{str}=&\sum_{i\ne j}[{\boldsymbol{A}_{i,j}^{t}\log \varsigma \left( sim\left( \hat{\boldsymbol{y}}_i,\hat{\boldsymbol{y}}_j \right) \right)} \\
    + &\left( 1-\boldsymbol{A}_{i,j}^{t} \right) \log \left( 1-\varsigma \left( sim\left( \hat{\boldsymbol{y}}_i,\hat{\boldsymbol{y}}_j \right) \right) \right)],
\end{split}
\end{equation}
where $\hat{\boldsymbol{y}}_i$ represents the predicted label vector of node $v_i^{t}$, $sim(\cdot, \cdot)$ represents the inner product, and $\varsigma(\cdot)$ represents the sigmoid function. 
Despite the limited availability of labeled data, the above regularization objective effectively utilizes the inherent homophily property in the graph to mitigate the challenge of label scarcity.

\begin{table*}[!t]
\caption{Comparison of experimental results in public and 10-shot settings. The notations "-PM," "-CS," "-C," "-P," and "-Com" represent the pre-training datasets PubMed, CiteSeer, Cora, Photo, and Computer, respectively. The best experimental results are highlighted in bold, while the second-best results are underscored with a underline.}
\label{tab:compare}
\resizebox{\textwidth}{!}{
\begin{tabular}{@{}cc|cccccccccc@{}}
\toprule
\multicolumn{2}{c|}{\multirow{2}{*}{Method}} & \multicolumn{2}{c}{PubMed} & \multicolumn{2}{c}{CiteSeer} & \multicolumn{2}{c}{Cora} & \multicolumn{2}{c}{Photo} & \multicolumn{2}{c}{Computer} \\ \cmidrule(l){3-4} \cmidrule(l){5-6} \cmidrule(l){7-8} \cmidrule(l){9-10} \cmidrule(l){11-12}
\multicolumn{2}{c|}{} & public & 10-shot & public & 10-shot & public & 10-shot & public & 10-shot & public & 10-shot \\ \midrule
\multicolumn{1}{c|}{\multirow{10}{*}{non-transfer}} & GCN                     & 78.66±0.56          & 73.28±0.93          & 70.50±0.75          & 64.52±1.40         & 82.00±0.97          & 71.88±1.26          & 92.17±0.75          & 85.39±1.57          & 86.66±1.20          & 71.97±1.16          \\
\multicolumn{1}{c|}{} & GAT                     & 78.30±0.43          & 74.96±0.67          & 70.94±1.08          & 64.14±2.12         & 80.58±1.50          & 72.04±0.61          & 92.91±0.22          & 86.38±0.65          & 86.80±0.71          & 74.82±2.36          \\
\multicolumn{1}{c|}{} & GRACE                   & 79.52±0.16          & 75.86±0.11          & 70.34±0.21          & 67.70±0.00         & 82.28±0.04          & 76.40±0.00          & 92.32±0.31          & 86.16±0.02          & 85.54±0.30          & 74.39±0.03          \\
\multicolumn{1}{c|}{} & COSTA                   & 79.94±1.16          & 76.98±1.29          & 70.36±1.29          & 65.56±1.94         & 81.84±0.92          & 76.28±1.42          & 92.04±0.58          & 83.02±0.43          & 87.00±0.33          & 71.28±0.82          \\
\multicolumn{1}{c|}{} & CCA-SSG                 & 80.58±0.85          & \uline{78.76±1.62}    & 73.76±0.75          & 67.46±0.92         & \textbf{83.94±1.02} & 76.84±0.80          & 92.74±0.33          & 85.56±0.67          & 88.08±0.35          & \uline{76.94±1.37}    \\
\multicolumn{1}{c|}{} & HomoGCL                 & \textbf{81.04±0.05} & \textbf{79.42±0.04} & 71.38±0.04          & 65.40±0.00         & 82.40±0.00          & 75.70±0.00          & 92.43±0.08          & 84.30±0.35          & 87.75±0.29          & 76.85±0.41          \\
\multicolumn{1}{c|}{} & GPPT                    & 77.78±0.31          & 74.84±0.55          & 67.56±0.33          & 64.14±0.63         & 80.16±0.38          & 72.94±0.24          & 92.10±0.23          & 86.32±0.56          & \textbf{88.34±0.18} & \textbf{77.30±0.44} \\ 
\multicolumn{1}{c|}{} & GPF         & 79.48±0.51 & 74.78±1.39 & 71.32±0.26 & 66.76±0.68 & 82.10±0.26 & 76.30±0.51 & 91.61±0.60 & 86.76±1.55 & 77.44±1.37 & 70.48±1.96 \\
\multicolumn{1}{c|}{} & GraphPrompt & 75.23±0.93 & 74.27±1.44 & 69.71±1.06 & 65.88±0.91 & 79.90±0.74 & 75.02±0.56 & 86.35±0.41 & 84.06±0.89 & 72.43±0.27 & 66.54±0.60 \\
\multicolumn{1}{c|}{} & ProG        & 75.85±0.45 & 71.43±0.98 & 71.31±0.99 & 68.48±1.26 & 82.03±0.59 & 76.69±0.83 & 85.18±1.70 & 86.86±0.62 & 66.65±1.95 & 67.20±1.54 \\ \midrule
\multicolumn{1}{c|}{\multirow{25}{*}{transfer}} & GRACE$_t$-PM                & 79.44±0.15          & 75.80±0.16          & 67.74±0.05          & 57.40±0.00         & 76.82±0.24          & 64.44±0.30          & 92.14±0.14          & 85.95±0.10          & 84.81±0.41          & 76.15±0.09          \\
\multicolumn{1}{c|}{} & GRACE$_t$-CS                & 76.58±0.04          & 72.46±0.39          & 70.50±0.24          & 67.70±0.00         & 79.04±0.05          & 71.70±0.00          & 92.46±0.38          & 86.81±0.02          & 85.08±0.62          & 76.05±0.01          \\
\multicolumn{1}{c|}{} & GRACE$_t$-C                 & 73.00±0.00          & 66.78±0.04          & 67.10±0.12          & 58.14±0.13         & 82.32±0.04          & 76.40±0.00          & 92.23±0.21          & 86.12±0.03          & 84.46±0.24          & 75.33±0.30          \\
\multicolumn{1}{c|}{} & GRACE$_t$-P                 & 71.10±0.00          & 57.70±0.62          & 58.00±0.00          & 49.38±0.04         & 72.40±0.00          & 57.88±0.04          & 92.25±0.34          & 86.20±0.02          & 84.24±0.21          & 74.57±0.01          \\
\multicolumn{1}{c|}{} & GRACE$_t$-Com               & 70.42±0.04          & 64.12±0.08          & 61.20±0.00          & 57.90±0.00         & 67.46±0.05          & 55.50±0.00          & 92.25±0.43          & 85.31±0.00          & 85.89±0.50          & 74.38±0.04          \\ \cmidrule{2-12}
\multicolumn{1}{c|}{} & GTOT-PM  & 76.48±1.12   & 71.92±0.66  & 69.96±1.31    & 60.10±0.54   & 78.82±1.07  & 68.54±0.44 & 90.18±0.88  & 83.69±0.52  & 84.88±0.31    & 64.60±3.28    \\
\multicolumn{1}{c|}{} & GTOT-CS  & 75.76±0.73   & 70.74±0.99  & 71.30±1.35    & 60.90±0.90   & 79.36±1.02  & 69.90±1.07 & 90.75±0.59  & 83.19±0.64  & 84.97±0.38    & 66.59±0.99    \\
\multicolumn{1}{c|}{} & GTOT-C   & 75.66±0.80   & 70.72±0.30  & 68.98±1.01    & 60.98±0.89   & 79.36±0.78  & 69.84±0.86 & 90.24±1.33  & 84.21±0.86  & 85.30±0.11    & 66.46±0.59    \\
\multicolumn{1}{c|}{} & GTOT-P   & 76.44±0.63   & 71.42±0.70  & 69.28±1.09    & 60.86±0.80   & 79.40±2.26  & 69.04±0.75 & 90.42±0.42  & 83.37±0.97  & 84.66±0.42    & 64.60±1.19    \\
\multicolumn{1}{c|}{} & GTOT-Com & 74.24±0.43   & 70.70±0.46  & 68.56±0.55    & 61.14±1.45   & 79.64±1.00  & 69.86±0.85 & 90.43±0.42  & 83.82±0.82  & 84.88±0.44    & 67.04±0.83    \\ \cmidrule{2-12}
\multicolumn{1}{c|}{} & AdapterGNN-PM  & 76.44±0.97   & 72.78±0.72  & 62.76±1.42    & 58.64±0.44   & 75.54±1.46  & 63.82±1.44 & 92.39±0.32  & 88.24±0.39  & 88.00±0.18    & 75.54±0.65    \\
\multicolumn{1}{c|}{} & AdapterGNN-CS  & 74.12±1.72   & 64.92±0.45  & 66.38±0.49    & 66.68±0.41   & 77.82±0.44  & 70.34±1.63 & 92.89±0.18  & 87.56±0.17  & 87.96±0.23    & 74.17±1.34    \\
\multicolumn{1}{c|}{} & AdapterGNN-C   & 73.86±0.11   & 60.76±1.90  & 64.22±0.58    & 60.94±0.37   & 82.08±0.37  & 72.62±2.58 & 92.77±0.42  & 87.07±0.19  & 87.91±0.17    & 74.66±1.09    \\
\multicolumn{1}{c|}{} & AdapterGNN-P   & 72.94±0.42   & 63.44±0.84  & 64.20±0.43    & 53.02±0.86   & 75.50±1.33  & 57.64±2.76 & 92.58±0.50  & 88.18±0.29  & 87.62±0.38    & 75.00±0.52    \\
\multicolumn{1}{c|}{} & AdapterGNN-Com & 72.50±0.62   & 58.94±2.89  & 63.64±0.68    & 58.74±0.33   & 74.12±0.73  & 56.42±2.50 & 92.66±0.45  & 88.20±0.77  & 87.63±0.49    & 72.64±1.96    \\ \cmidrule{2-12}
\multicolumn{1}{c|}{} & GraphControl-PM & 78.30±0.43 & 75.96±1.00           & 69.02±1.65 & 60.82±0.41           & 77.84±0.67 & 69.32±2.11           & 90.73±0.75 & 86.65±0.51           & 85.94±0.96 & 74.47±2.43           \\
\multicolumn{1}{c|}{} & GraphControl-CS  & 75.98±0.66 & 72.56±0.63           & 70.80±0.97 & 68.56±0.98           & 77.54±1.24 & 74.04±0.79           & 90.15±0.67 & 87.44±0.29           & 86.36±0.26 & 74.03±1.01           \\
\multicolumn{1}{c|}{} & GraphControl-C  & 74.52±0.88 & 66.00±0.66           & 66.20±0.94 & 58.70±0.56           & 77.14±1.72 & 76.44±0.31           & 90.52±0.48 & 86.57±0.70           & 85.99±0.51 & 73.17±1.50           \\
\multicolumn{1}{c|}{} & GraphControl-P & 74.58±1.99 & 58.94±0.69           & 59.12±1.34 & 53.36±1.92           & 73.46±1.73 & 65.72±0.44           & 90.67±0.50 & 86.23±0.59           & 86.11±0.50 & 71.86±2.38           \\
\multicolumn{1}{c|}{} & GraphControl-Com  & 72.90±0.31 & 65.60±0.42           & 60.54±0.90 & 60.68±0.51           & 73.82±1.50 & 63.32±0.62           & 90.73±0.73 & 83.20±0.36           & 86.08±0.62 & 69.18±0.66 \\ \cmidrule{2-12}
\multicolumn{1}{c|}{} & GraphLoRA-PM            & \uline{80.86±0.39}    & 78.06±0.59          & \textbf{74.20±0.47} & \uline{74.62±0.57}   & \uline{82.42±0.40}    & \uline{78.08±0.3}     & \uline{93.00±0.36}    & 88.34±0.51          & 87.70±0.63          & 76.54±0.39          \\
\multicolumn{1}{c|}{} & GraphLoRA-CS            & 80.64±0.43          & 78.08±0.67          & \uline{74.08±0.26}    & \textbf{74.80±0.6} & 82.00±0.23          & \textbf{78.30±0.46} & \textbf{93.08±0.11} & \uline{89.00±0.43}    & 87.72±0.45          & 76.44±0.05          \\
\multicolumn{1}{c|}{} & GraphLoRA-C             & 80.38±0.50          & 77.78±0.55          & 73.98±0.45          & \uline{74.62±0.65}   & 82.00±0.80          & 78.00±0.51          & 92.92±0.29          & 88.69±0.35          & \uline{88.10±0.31}    & 76.45±0.48          \\
\multicolumn{1}{c|}{} & GraphLoRA-P             & 78.46±0.67          & 74.84±1.67          & 72.80±0.58          & 72.02±1.64         & 81.28±0.40          & 76.54±0.54          & 92.47±0.33          & 88.89±0.89          & 87.35±0.49          & 75.27±0.22          \\
\multicolumn{1}{c|}{} & GraphLoRA-Com           & 79.02±0.77          & 74.44±1.36          & 72.72±0.35          & 72.12±1.29         & 80.84±1.05          & 76.44±0.9           & 92.42±0.22          & \textbf{89.07±0.22} & 87.28±0.31          & 75.35±1.26          \\ \bottomrule
\end{tabular}}
\end{table*}

\subsection{Optimization}
Finally, we employ the following output layer to classify the target nodes based on the output of the GNN, 
\begin{align}
    \tilde{y}_i=&c\left( \boldsymbol{h}_i+\boldsymbol{h}_{i}^{'} \right), 
\end{align}
where $c(\cdot)$ represents the classifier, and $\tilde{y}_i=argmax \left( \hat{\boldsymbol{y}}_i \right) $ denotes the predicted class of node $v_i^t$. The classification loss function is defined as follows:
\begin{align}
    \mathcal{L} _{cls}=&-\frac{1}{N^t}\sum_i{y_i\log \tilde{y}_i+\left( 1-y_i \right) \log \left( 1-\tilde{y}_i \right)}.
\end{align}

After acquiring the pre-trained GNN from the source graph, we proceed to fine-tune it by utilizing the labeled data available on the target graph. To achieve this, we employ multitask learning to jointly optimize multiple objective functions. The overall objective function is defined as follows:
\begin{align}
    \mathcal{L} =\mathcal{L} _{cls}+\lambda _1\mathcal{L} _{smmd}+\lambda _2\mathcal{L} _{cl}+\lambda _3\mathcal{L} _{str}+\lambda _4\left\| \varTheta \right\|,
\end{align}
where the last term acts as a regularization term to prevent overfitting, and the weight coefficients $\lambda _{1-4}$ determine the importance of each objective function in the overall optimization process.

\subsection{Complexity Analysis}
In this section, we analyze the time complexity of GraphLoRA. For a target graph with $N^t$ nodes and $M$ edges, the node feature adaptation module performs feature mapping with a runtime of $\mathcal{O}(N^t)$. By leveraging fast approximations~\cite{DBLP:conf/focs/AndersenCL06, DBLP:conf/sigmod/WeiHX0SW18}, the diffusion matrix $\boldsymbol{S}$ can be obtained in $\mathcal{O}(N^t)$. $\mathcal{L}_{smmd}$ necessitates calculating distances between node pairs in each batch, achievable in $\mathcal{O}(N^tb)$ through the utilization of sampling techniques, where $b$ denotes the batch size. Similarly, in the structural knowledge transfer learning module, $\mathcal{L}_{cl}$ requires calculating similarity between node pairs in each batch, also with a complexity of $\mathcal{O}(N^tb)$. As for the structure-aware regularization objective, $\mathcal{L}_{str}$ considers connected nodes as positive samples and samples a small subset of nodes as negative samples, resulting in a complexity of $\mathcal{O}(M)$. In summary, the fine-tuning time complexity of GraphLoRA is $\mathcal{O}(N^tb + M)$, which is lightweight considering that the batch size is typically small.

\section{Experiments}
In this section, we conduct extensive experiments on benchmark datasets to evaluate GraphLoRA's effectiveness in cross-graph transfer learning, aiming to answer the following research questions:
\begin{itemize}
  \item[\textbf{RQ1}:] How effective and efficient is GraphLoRA?
  \item[\textbf{RQ2}:] Is GraphLoRA sensitive to hyperparameters?
  \item[\textbf{RQ3}:] How do different modules contribute to its effectiveness?
  \item[\textbf{RQ4}:] Can GraphLoRA mitigate catastrophic forgetting?
  \item[\textbf{RQ5}:] Can GraphLoRA learn more distinguishable representations?
\end{itemize}

\begin{table}[!t]
\caption{Statistics of datasets.}
\label{tab:stat}
\begin{tabular}{@{}c|cccc@{}}
\toprule
Dataset  & \#Nodes & \#Edges & \#Features & \#Classes \\ \midrule
PubMed   & 19,717   & 88,651   & 500        & 3         \\
CiteSeer & 3,327    & 9,228    & 3,703       & 6         \\
Cora     & 2,708    & 10,556   & 1,433       & 7         \\
Photo    & 7,650    & 238,163  & 745        & 8         \\
Computer & 13,752   & 491,722  & 767        & 10        \\
Reddit   & 232,965 & 114,615,892  & 602   & 41        \\
ogbn-arxiv   & 169,343 & 1,166,243  & 128   & 40        \\
ogbn-products   & 2,449,029 &  61,859,140  & 100   & 47      \\ \bottomrule
\end{tabular}
\end{table}

\subsection{Experimental Setup}
\subsubsection{Datasets}
We evaluate GraphLoRA on eight datasets: PubMed, CiteSeer, Cora~\cite{DBLP:conf/icml/YangCS16}, and ogbn-arxiv~\cite{DBLP:conf/nips/HuFZDRLCL20} are citation networks, where each node represents a paper, edges denote citations, and the node labels indicate the topics of the papers. Photo, Computer~\cite{DBLP:journals/corr/abs-1811-05868}, and ogbn-products~\cite{DBLP:conf/nips/HuFZDRLCL20} are Amazon product co-purchasing networks, where each node represents a product, edges represent co-purchases, and labels denote the product categories. In the Reddit~\cite{DBLP:conf/nips/HamiltonYL17} dataset, nodes represent posts, edges indicate posts commented on by the same user, and labels represent the communities of the posts. Statistics for these datasets are presented in~\tablename~\ref{tab:stat}. Detailed descriptions of these datasets are provided in~\appendixname~\ref{appendix:data}.

\subsubsection{Baselines}
Baselines include supervised methods (GCN~\cite{DBLP:conf/iclr/KipfW17} and GAT~\cite{DBLP:conf/iclr/VelickovicCCRLB18}), graph contrastive learning methods (GRACE~\cite{DBLP:journals/corr/abs-2006-04131}, COSTA~\cite{DBLP:conf/kdd/ZhangZSKK22}, CCA-SSG~\cite{DBLP:conf/nips/ZhangWYWY21}, and HomoGCL~\cite{DBLP:conf/kdd/LiWXL23a}), graph prompt learning methods (GPPT~\cite{DBLP:conf/kdd/SunZHWW22}, GPF~\cite{DBLP:conf/nips/FangZYWC23}, GraphPrompt~\cite{DBLP:conf/www/LiuY0023}, and ProG~\cite{DBLP:conf/kdd/SunCLLG23}), and transfer learning methods (GRACE$_t$, 
GTOT~\cite{DBLP:conf/ijcai/ZhangXHRB22}, AdapterGNN~\cite{DBLP:conf/aaai/LiH024}
and GraphControl~\cite{DBLP:journals/corr/abs-2310-07365}). 
Among these, GRACE$_t$ involves pretraining a GNN on the source graph using GRACE and then transferring it to the target graph for testing.
Detailed descriptions of baselines are provided in~\appendixname~\ref{appendix:baseline}.

\subsubsection{Settings}
For GraphLoRA, we use a two-layer GAT model as the backbone. The projector $p\left( \cdot; \boldsymbol{\omega} \right)$ and classifier $c(\cdot)$ are implemented with a single linear layer. The GNN is pre-trained using GRACE and fine-tuned on the target graph with our method. Experiments are conducted in the public setting with sufficient labels, and in the 5-shot and 10-shot settings with limited labels. In the public setting, PubMed, CiteSeer, and Cora are split using public partitions~\cite{DBLP:conf/icml/YangCS16}, where each category has 20 training labels. For Photo, Computer, and Reddit, we randomly split the datasets into training (10\%), validation (10\%), and testing (80\%) sets. For the ogbn datasets, we use the public splits provided by the authors~\cite{DBLP:conf/nips/HuFZDRLCL20}. In the 5-shot and 10-shot settings, each category in the training set contains only 5 and 10 labels, respectively, with 80\% for testing and the remaining data for validation. For all methods, we conduct the experiments five times and report the average accuracy and standard deviation. The additional results for the 5-shot setting are provided in the Appendix due to space constraints.

For GCN and GAT, we train the GNN using labeled data from the target graph. For graph contrastive learning and graph prompt learning methods, we pre-train the GNN unsupervised on the target graph, then freeze the model and fine-tune either a linear classifier or a graph prompt using the target labels. For transfer learning methods, we pre-train the GNN unsupervised on the source graph, then transfer the model to the target graph and fine-tune a linear classifier or adapter using the target labels. For all methods, the GNN’s hidden dimensions are fixed at 512 and 256. The learning rate and weight decay are tuned within [1e-5, 1e-1]. For GraphLoRA, we set $r=32$, and $\lambda$ is tuned within [0.1, 10]. The Adam~\cite{DBLP:journals/corr/KingmaB14} optimizer is used for optimization, and other hyperparameters for baselines are tuned as suggested by the authors.

\subsection{Performance Comparison (RQ 1)}
The performance of GraphLoRA on node classification tasks is presented in~\tablename~\ref{tab:compare}. GraphLoRA achieves either the best or second-best performance in most cases, underscoring its effectiveness. Compared to non-transfer learning scenarios, transfer learning scenarios are more challenging. Nevertheless, GraphLoRA achieves an average improvement of 1.01\% over the best baseline results and 3.33\% over GRACE. Specifically, it achieves an average improvement of 2.23\% over GRACE in the public setting and 4.43\% in the 10-shot setting. GraphLoRA shows a more significant performance improvement in the 10-shot setting, underscoring its effectiveness in scenarios with scarce labels.

\subsubsection{Cross-graph Transfer Learning}
From~\tablename~\ref{tab:compare}, we can observe that transfer learning methods exhibit poorer performance compared to non-transfer learning methods, highlighting the significant challenge of cross-graph transfer. In contrast, GraphLoRA demonstrates impressive transfer learning capabilities, even in cross-domain scenarios. Specifically, GraphLoRA achieves an average improvement of 10.12\% over GRACE$_t$, indicating that direct transfer of pre-trained GNNs results in suboptimal performance. Additionally, GraphLoRA achieves average improvements of 8.21\% over GTOT, 9.78\% over AdapterGNN, and 8.74\% over GraphControl.

\begin{figure}[!t]
  \centering
  \begin{subfigure}{0.23\textwidth}
    \includegraphics[width=\linewidth]{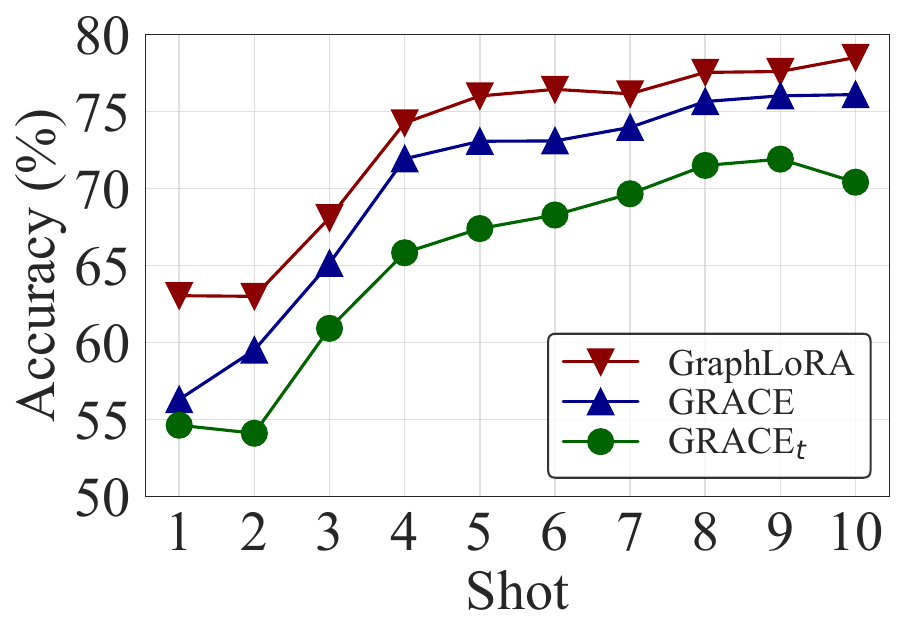}
    \caption{Average accuracy across different shots.}
    \label{fig:shots}
  \end{subfigure}
  \begin{subfigure}{0.23\textwidth}
    \includegraphics[width=\linewidth]{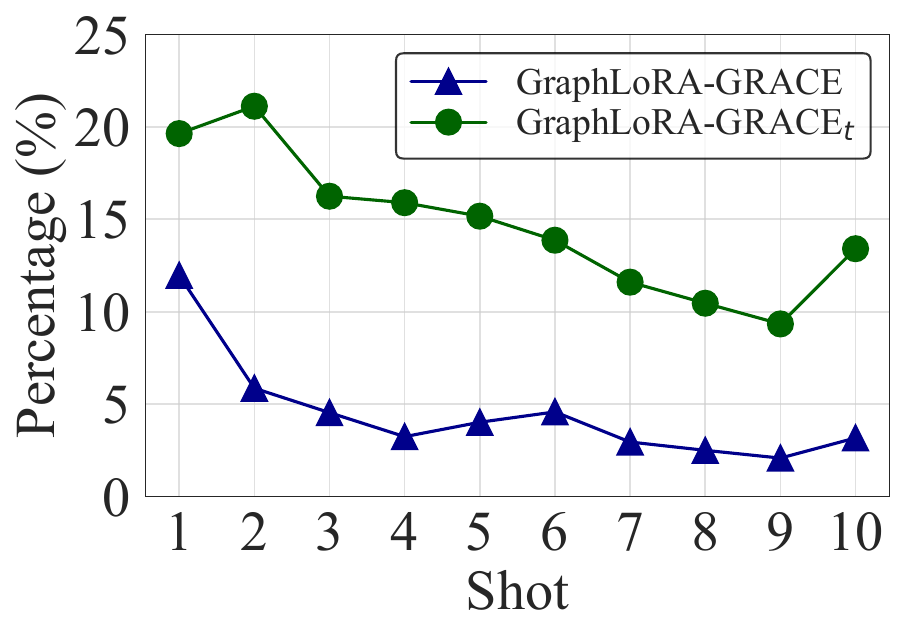}
    \caption{GraphLoRA improvements over GRACE and GRACE$_t$.}
    \label{fig:improvement}
  \end{subfigure}
  \caption{Experimental results across different shots.}
  \label{fig:impact_of_shots}
\end{figure}

\begin{table}[]
\setlength{\tabcolsep}{1mm}
\caption{Comparison of runtimes of different methods in public and 10-shot settings. For transfer learning methods, we report the fine-tuning runtimes.}
\label{tab:runtime}
\resizebox{\linewidth}{!}{
\begin{tabular}{c|cccccccccc|c}
\hline
\multirow{2}{*}{Method} & \multicolumn{2}{c}{PubMed} & \multicolumn{2}{c}{CiteSeer} & \multicolumn{2}{c}{Cora} & \multicolumn{2}{c}{Photo} & \multicolumn{2}{c|}{Computer} & \multirow{2}{*}{Avg.} \\ \cmidrule(l){2-3} \cmidrule(l){4-5} \cmidrule(l){6-7} \cmidrule(l){8-9} \cmidrule(l){10-11}
& public       & 10-shot     & public       & 10-shot       & public      & 10-shot    & public      & 10-shot     & public         & 10-shot       &                       \\ \hline
GCN & 2.1s & 1.7s & 2.3s & 2.3s & 2.1s & 2.1s & 11.6s & 11.2s & 12.6s & 12.1s & 6.0s \\
GAT & 2.8s & 2.8s & 2.8s & 9.0s & 2.5s & 8.0s & 14.3s & 5.5s & 16.1s & 4.4s & 6.8s \\
GRACE & 88.5s & 88.7s & 6.4s & 19.2s & 16.9s & 18.8s & 31.7s & 24.8s & 64.6s & 62.2s & 42.2s \\ 
COSTA & 1074.5s & 524.0s & 61.7s & 127.4s & 288.1s & 135.2s & 109.2s & 71.8s & 28.0s & 77.1s & 249.7s \\
CCA-SSG & 8.7s & 5.9s & 6.8s & 7.8s & 4.5s & 5.0s & 4.9s & 6.9s & 13.4s & 9.6s & 7.3s \\
HomoGCL & 166.9s & 155.2s & 6.6s & 7.1s & 5.1s & 5.3s & 26.8s & 17.9s & 100.8s & 92.9s & 58.5s \\
GPPT & 23.6s & 7.9s & 36.5s & 32.8s & 37.5s & 27.8s & 181.7s & 96.5s & 378.7s & 179.7s & 100.3s \\
GPF         & 26.4s  & 38.6s   & 2.6s     & 4.3s    & 7.8s   & 8.4s    & 185.9s & 90.0s   & 246.0s    & 109.3s  & 71.9s  \\
GraphPrompt & 16.2s  & 32.4s   & 2.4s     & 2.4s    & 6.6s   & 5.7s    & 21.9s  & 9.7s    & 26.7s     & 31.5s   & 15.55s \\
ProG        & 24.5s  & 15.6s   & 6.0s     & 7.5s    & 11.3s  & 21.7s   & 45.8s  & 23.5s   & 43.6s     & 33.3s   & 23.3s  \\ \hline
GRACE$_t$ & 0.6s & 0.4s & 11.5s & 12.0s & 0.6s & 12.0s & 12.1s & 4.1s & 12.9s & 5.8s & 7.2s \\
GTOT & 17.8s & 79.3s & 9.8s & 17.6s & 6.2s & 10.9s & 23.2s & 19.9s & 79.3s & 56.3s & 32.0s \\
AadpterGNN & 18.9s & 7.8s & 14.6s & 17.7s & 11.6s & 17.1s & 45.3s & 38.6s & 106.9s & 65.4s & 34.4s \\
GraphControl & 0.9s & 1.1s & 0.3s & 1.5s & 1.0s & 0.8s & 15.6s & 9.8s & 29.5s & 13.0s & 7.4s \\
GraphLoRA & 43.7s & 11.2s & 5.0s & 8.8s & 10.7s & 3.6s & 44.6s & 17.5s & 108.9s & 56.3s & 31.0s \\ \hline
\end{tabular}}
\end{table}

\subsubsection{Scarce Labeling Impact on Performance}
To further explore the impact of label scarcity on performance, we investigate the performance of GraphLoRA across the 1-shot to 10-shot setting, as illustrated in~\figurename~\ref{fig:impact_of_shots}. The figure reveals that, overall, GraphLoRA demonstrates a greater performance improvement compared to GRACE and GRACE$_t$ in scenarios with scarce labels. This observation not only reaffirms our earlier analysis but also substantiates the crucial role of transfer learning in scenarios with scarce labels. Furthermore, it is noteworthy that GraphLoRA consistently exhibits a more substantial performance improvement compared to GRACE$_t$, providing additional confirmation that the direct transfer of pre-trained GNNs will result in suboptimal performance.

\subsection{Efficiency Comparison (RQ 1)}
Efficiency is a critical consideration in practical applications~\cite{DBLP:conf/kdd/LiuGLLXCWYZDDX21}. To evaluate the efficiency of GraphLoRA, we measure the runtime of different methods in both public and 10-shot settings on the same device, as depicted in~\tablename~\ref{tab:runtime}. For transfer learning methods, we present the total runtime until model convergence during fine-tuning, while for other methods, we present the total runtime until model convergence during training. From~\tablename~\ref{tab:runtime}, it is shown that the average runtime of GraphLoRA is lower than that of most baselines, indicating its high efficiency. It is noteworthy that GraphLoRA exhibits higher efficiency in the 10-shot setting compared to other baselines. This may be attributed to the effective mitigation of label sparsity through the structure-aware regularization objective, thereby facilitating easier model convergence.

\begin{table}[]
\setlength{\tabcolsep}{1mm}
\caption{Experimental results on the Reddit dataset.}
\label{tab:large_scale}
\resizebox{\linewidth}{!}{
\begin{tabular}{c|llllll}
\hline
Method       & \multicolumn{1}{c}{PM→R} & \multicolumn{1}{c}{CS→R} & \multicolumn{1}{c}{C→R} & \multicolumn{1}{c}{P→R} & \multicolumn{1}{c}{C→R} & \multicolumn{1}{c}{R→R} \\ \hline
GTOT         & 93.04±0.15               & 92.99±0.11               & 93.15±0.10              & 93.11±0.08              & 93.10±0.12              & 93.18±0.07              \\
AdapterGNN   & 91.21±0.10               & 91.61±0.07               & 91.30±0.06              & 91.07±0.23              & 91.18±0.08              & \textbf{93.89±0.12}              \\
GraphControl & 92.79±0.12               & 93.01±0.10               & 92.93±0.10              & 92.76±0.11              & 92.67±0.13              & 93.14±0.11              \\
GraphLoRA    & \textbf{93.25±0.07}               & \textbf{93.22±0.10}               & \textbf{93.44±0.09}              & \textbf{93.48±0.10}              & \textbf{93.44±0.08}              & 93.58±0.07              \\ \hline
\end{tabular}}
\end{table}

\begin{table}[]
\caption{Performance and runtime on large-scale datasets, where OOM indicates an "out-of-memory" issue.}
\label{tab:large_graphs}
\setlength{\tabcolsep}{1mm}
\resizebox{\linewidth}{!}{
\begin{tabular}{c|cccccc}
\hline
Method      & \multicolumn{2}{c}{Reddit} & \multicolumn{2}{c}{ogbn-arxiv} & \multicolumn{2}{c}{ogbn-products} \\ \hline
GRACE       & 92.86±0.02    & \textbf{301.7s}     & 67.65±0.11       & 178.2s      & 73.62±0.31        & 2296.1s       \\
COSTA       & OOM           & OOM        & OOM              & OOM         & OOM               & OOM           \\
CCA-SSG     & 78.76±0.16    & 580.0s     & 67.76±0.18       & \textbf{84.8s}       & 66.38±0.49        & 1533.3s       \\
HomoGCL     & OOM           & OOM        & OOM              & OOM         & OOM               & OOM           \\
GPPT        & 92.03±0.04    & 4293.1s    & 65.82±0.23       & 593.5s      & 67.93±0.27    & 22642.5s               \\
GPF         & 92.10±0.07    & 831.7s     & 67.11±0.17       & 150.0s      & 74.04±0.50        & \textbf{1283.0s}       \\
GraphPrompt & 90.16±0.03    & 983.0s     & 57.62±0.08       & 351.0s      & OOM               & OOM           \\
ProG        & 92.29±0.05    & 633.1s     & 67.90±0.15       & 140.9s      & OOM               & OOM           \\ \hline
GraphLoRA   & \textbf{93.58±0.07}    & 785.2s     & \textbf{68.61±0.20}       & 192.7s      & \textbf{75.05±0.12}        & 4077.6s       \\ \hline
\end{tabular}}
\end{table}

\begin{table*}[]
\caption{Ablation experiment results in public and 10-shot settings. The best experimental results are highlighted in bold.}
\label{tab:ablation}
\resizebox{\textwidth}{!}{
\begin{tabular}{@{}c|cccccccccc@{}}
\toprule
\multicolumn{1}{c|}{\multirow{2}{*}{Variants}} & \multicolumn{2}{c}{PubMed}                        & \multicolumn{2}{c}{CiteSeer}              & \multicolumn{2}{c}{Cora}                 & \multicolumn{2}{c}{Photo}                 & \multicolumn{2}{c}{Computer}             \\ \cmidrule(l){2-3} \cmidrule(l){4-5} \cmidrule(l){6-7} \cmidrule(l){8-9} \cmidrule(l){10-11}
\multicolumn{1}{l|}{}                          & public              & 10-shot & public              & 10-shot             & public              & 10-shot            & public              & 10-shot             & public              & 10-shot             \\ \midrule
w/ mmd                                       & 79.46±0.82          & 76.44±1.76                  & 73.12±0.43          & 74.08±0.91          & 81.46±0.54          & 76.80±0.86         & 92.70±0.37          & 87.91±1.03          & 87.75±0.12          & 76.01±0.24          \\
w/o smmd                                       & 77.58±0.19          & 75.42±0.58                  & 71.52±0.72          & 69.24±2.71          & 81.20±0.30          & 76.38±0.76         & 92.56±0.33          & 87.90±1.32          & 87.59±0.17          & 74.96±0.68          \\
w/o cl                                        & 79.76±0.47          & 77.70±0.94                  & 73.70±0.66          & 74.42±0.56          & 81.40±0.70          & 77.24±0.60         & 92.64±0.14          & 87.81±0.36          & 87.67±0.44          & 76.10±0.23          \\
w/o str                                        & 79.88±0.49          & 77.54±0.83                  & 71.12±0.55          & 68.12±0.42          & 80.64±0.65          & 74.50±1.34         & 92.63±0.49          & 87.96±0.83          & 87.60±0.27          & 75.87±0.67          \\
w/o lrd                                       & 80.06±0.90          & 77.20±1.29                  & 72.52±1.65          & 72.94±0.88          & 80.88±0.36          & 77.16±1.87         & 92.63±0.20          & 88.15±0.16          & 87.79±0.34          & 76.24±0.52          \\
w/o nfa                                        & 79.72±0.13          & 76.86±0.09                  & 68.20±0.25          & 27.76±0.65          & 75.24±0.55          & 69.68±0.24         & 87.26±0.82          & 83.17±0.34          & 80.97±0.82          & 69.46±1.06          \\
w/o sktl                                        & 80.02±0.41          & 77.22±0.78                  & 72.84±0.61          & 73.72±0.33          & 81.72±1.20          & 77.08±0.74         & 92.61±0.42          & 88.17±0.97          & \textbf{87.92±0.23} & 76.23±0.48          \\ \midrule
GraphLoRA                                      & \textbf{80.86±0.39} & \textbf{78.06±0.59}         & \textbf{74.20±0.47} & \textbf{74.62±0.57} & \textbf{82.42±0.40} & \textbf{78.08±0.30} & \textbf{93.00±0.36} & \textbf{88.34±0.51} & 87.70±0.63          & \textbf{76.54±0.39} \\ \bottomrule
\end{tabular}}
\end{table*}

\subsection{Results on Large-Scale Dataset (RQ 1)}
GraphLoRA can be easily applied to large-scale graphs using sampling techniques. We conduct experiments in both transfer and non-transfer scenarios. In the transfer scenario, we compare GraphLoRA with other transfer learning methods on the Reddit dataset, with the results shown in~\tablename~\ref{tab:large_scale}. These results show that GraphLoRA outperforms all other methods in most cases. In the non-transfer scenario, we compare GraphLoRA with other non-transfer learning methods on three large-scale graphs. The performance and runtime results are shown in~\tablename~\ref{tab:large_graphs}. GraphLoRA consistently achieves the best performance and demonstrates competitive runtime efficiency, offering a better balance between effectiveness and efficiency. Overall, GraphLoRA demonstrates superior performance, highlighting its effectiveness on large-scale graphs.

\subsection{Hyperparameter Analysis (RQ 2)}
\subsubsection{Impact of $\lambda$}
The model's performance varies with different combinations of coefficients in the objective function. To investigate GraphLoRA's sensitivity to hyperparameters, we conduct a parameter analysis on these coefficients. 
In our experiments, we tune the values of $\lambda_1$, $\lambda_2$, and $\lambda_3$ within the range of [0.1,10]. The experimental results are presented in~\figurename~\ref{fig:lambda_ana}, demonstrating that GraphLoRA's performance remains generally stable, indicating low sensitivity to hyperparameters.
Additionally, the impact of parameter adjustments on performance varies across datasets. For instance, the performance of GraphLoRA improves with an increase in $\lambda_3$ on the CiteSeer dataset. Conversely, its performance decreases with an increase in $\lambda_3$ on the PubMed and Computer datasets. To achieve optimal performance, strategies such as grid search, random search, and Bayesian optimization can be employed to obtain the best hyperparameter combination~\cite{DBLP:journals/corr/abs-2003-05689}.
Overall, GraphLoRA demonstrates low sensitivity to hyperparameters, although the optimal parameter combination varies across different datasets.

\begin{figure}[!t]
  \centerline{\includegraphics[width=1\linewidth]{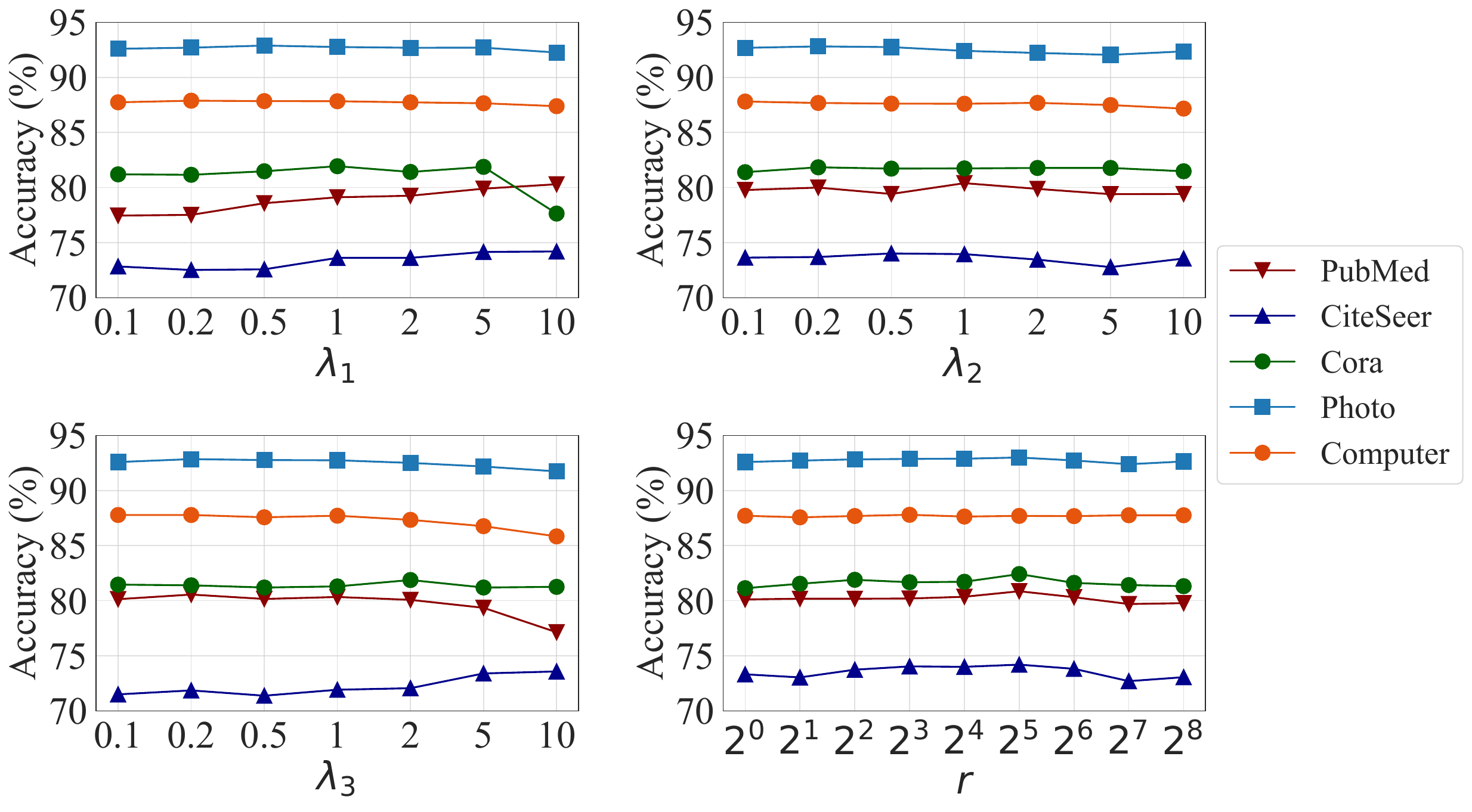}}
  \caption{Performance across varying hyperparameter values.}
  \label{fig:lambda_ana}
\end{figure}

\subsubsection{Impact of $r$}
The hyperparameter $r$ determines the parameter size of GraphLoRA. We evaluate GraphLoRA's performance across $r$ values ranging from $2^0$ to $2^8$ in the public setting. The results are depicted in~\figurename~\ref{fig:lambda_ana}. The figure reveals that GraphLoRA maintains stable performance across different values of $r$. Even when $r$ is set to 1, GraphLoRA exhibits commendable performance, aligning with the understanding that a small $r$ value is adequate for LoRA~\cite{DBLP:conf/iclr/HuSWALWWC22}. Notably, GraphLoRA experiences a decline in performance when $r$ is too small or too large. Generally, optimal performance is achieved when $r$ falls within the range of $2^3$ to $2^5$, with the tunable parameter ranging from 7\% to 20\%. This can be attributed to a small $r$ limiting parameters for effective fine-tuning, whereas a large $r$ may lead to overfitting due to an abundance of tunable parameters.

\subsection{Ablation Studies (RQ 3)}
To evaluate the effectiveness of each module in GraphLoRA, we compare it with seven model variants. Specifically, 
"w/ mmd" represents the method using the target term $\mathcal{L} _{mmd}$ rather than $\mathcal{L} _{smmd}$. 
"w/o smmd", "w/o cl", and "w/o str" represent methods without using the target terms $\mathcal{L} _{smmd}$, $\mathcal{L} _{cl}$, and $\mathcal{L} _{str}$, respectively. Additionally, "w/o lrd" is the method without employing low-rank decomposition for weight updates, while "w/o nfa" and "w/o sktl" represent methods without utilizing the node feature adaptation module and structural knowledge transfer learning module, respectively. 
The results in the public setting and 10-shot setting, following pre-training on the PubMed dataset, are depicted in~\tablename~\ref{tab:ablation}.

As illustrated in~\tablename~\ref{tab:ablation}, it is shown that GraphLoRA consistently outperforms seven variants in most cases, thereby demonstrating the effectiveness of each module of GraphLoRA.
Specifically, the most significant performance decline is observed for "w/o nfa" and "w/o smmd," emphasizing the importance of considering the discrepancy in feature distributions in transfer learning. This observation further validates the effectiveness of our proposed Structure-aware Maximum Mean Discrepancy for measuring the discrepancy in node feature distributions. Moreover, "w/o str" exhibits a more significant performance decline in the 10-shot setting compared to the public setting, indicating that the structure-aware regularization indeed contributes to improving the adaptability of pre-trained GNNs in scenarios with scarce labels.

\subsection{Catastrophic Forgetting Analysis (RQ 4)}
\begin{table}[]
\setlength{\tabcolsep}{1mm}
\caption{Catastrophic forgetting analysis. After pre-training on the PubMed dataset, we fine-tune the model on other datasets and then test it back on the PubMed dataset.
}
\label{tab:cata_forget_ana}
\resizebox{\linewidth}{!}{
\begin{tabular}{c|ccccc}
\hline
Method    & PM & PM$\rightarrow$CS$\rightarrow$PM   & PM$\rightarrow$C$\rightarrow$PM       & PM$\rightarrow$P$\rightarrow$PM      & PM$\rightarrow$Com$\rightarrow$PM  \\ \hline
FT           & 79.52±0.16 & 71.80±1.32 & 75.34±1.34 & 64.90±1.54 & 59.58±2.61 \\
GTOT         & 76.48±1.12 & 78.28±0.33 & 72.28±0.94 & 67.82±2.50 & 61.58±2.07 \\
AdapterGNN   & 76.44±0.97 & 77.50±0.68 & 76.50±0.73 & 74.06±1.95 & 73.34±1.56 \\
GraphControl & 78.30±0.43 & 78.00±0.27 & 75.16±2.09 & 65.48±1.02 & 65.42±3.29 \\
GraphLoRA    & \textbf{80.86±0.39} & \textbf{79.84±0.28} & \textbf{79.82±0.24} & \textbf{80.06±0.28} & \textbf{79.88±0.28} \\ \hline
\end{tabular}}
\end{table}

Fine-tuning the pretrained model with full parameters often leads to in catastrophic forgetting. To mitigate this, we freeze the pretrained parameters and introduce additional tunable parameters. To evaluate GraphLoRA's ability to alleviate catastrophic forgetting, we first pre-train the model on the PubMed dataset, then fine-tune it on other datasets, and finally assess its performance back on PubMed. Experimental results comparing GraphLoRA with full parameter fine-tuning (FT) and other baselines are presented in~\tablename~\ref{tab:cata_forget_ana}. FT exhibits a significant performance decline, whereas GraphLoRA shows only a marginal decrease. Other baselines also experience performance declines, while less severe than that of FT. This is attributed to the fact that these methods freeze the pre-trained parameters while introducing additional trainable parameters, thus mitigating the issue of catastrophic forgetting to some degree. Moreover, GraphLoRA significantly outperforms FT (average 18.64\%), highlighting its effectiveness in mitigating catastrophic forgetting.

\subsection{Visualization of Representations (RQ 5)}
In addition to quantitative analysis, we employ the t-SNE~\cite{van2008visualizing} method to visually assess the GraphLoRA's performance by visualizing the learned node embeddings on the CiteSeer dataset in the 10-shot setting. Specifically, \figurename~\ref{fig:GRACE_emb} shows embeddings learned by GRACE, while \figurename~\ref{fig:GRACEt_emb} and \figurename~\ref{fig:GraphLoRA_emb} display embeddings learned by GRACE$_t$ and GraphLoRA, respectively, following pre-training on the PubMed dataset. Each point in these figures represents a node, with its color denoting its label. From~\figurename~\ref{fig:visualization}, we observe that, compared to GRACE, the embeddings learned by GRACE$_t$ exhibit more blurred class boundaries, whereas the embeddings learned by GraphLoRA present clearer class boundaries. This observation suggests that GraphLoRA has a stronger capacity for learning node embeddings, proving beneficial for downstream tasks.

\begin{figure}[!t]
  \begin{subfigure}{0.15\textwidth}
    \includegraphics[width=\linewidth]{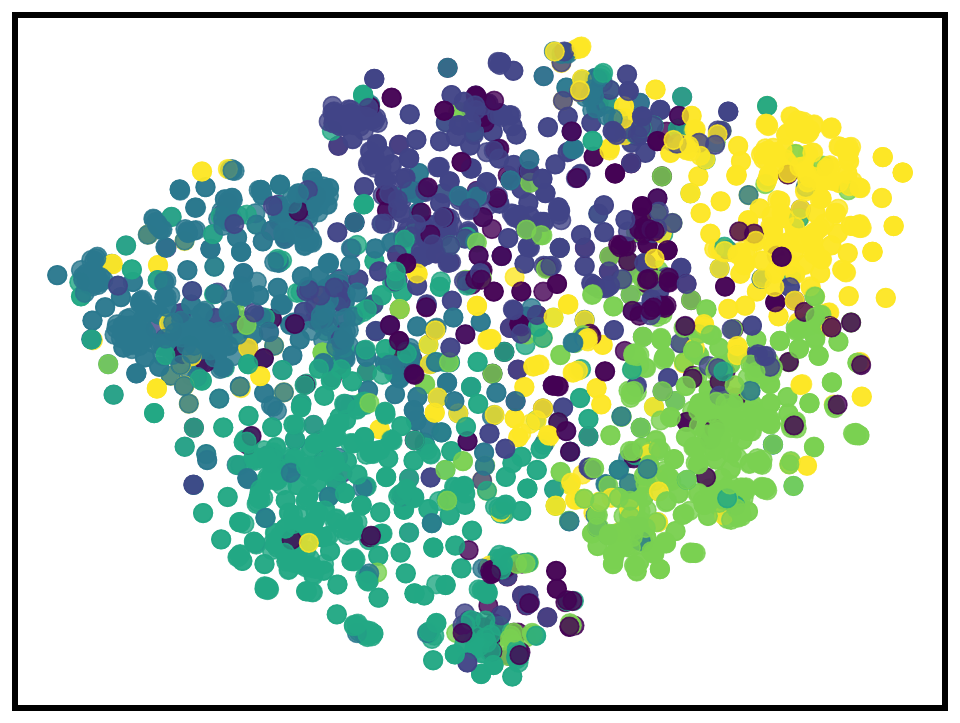}
    \caption{GRACE}
    \label{fig:GRACE_emb}
  \end{subfigure}
  \begin{subfigure}{0.15\textwidth}
    \includegraphics[width=\linewidth]{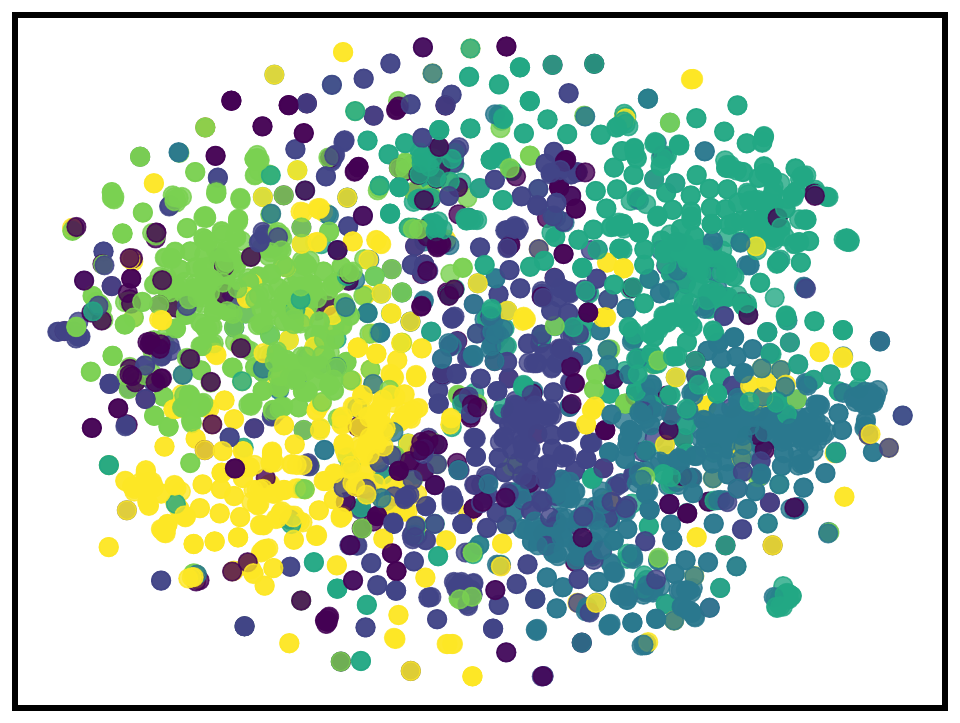}
    \caption{GRACE$_t$}
    \label{fig:GRACEt_emb}
  \end{subfigure}
  \begin{subfigure}{0.15\textwidth}
    \includegraphics[width=\linewidth]{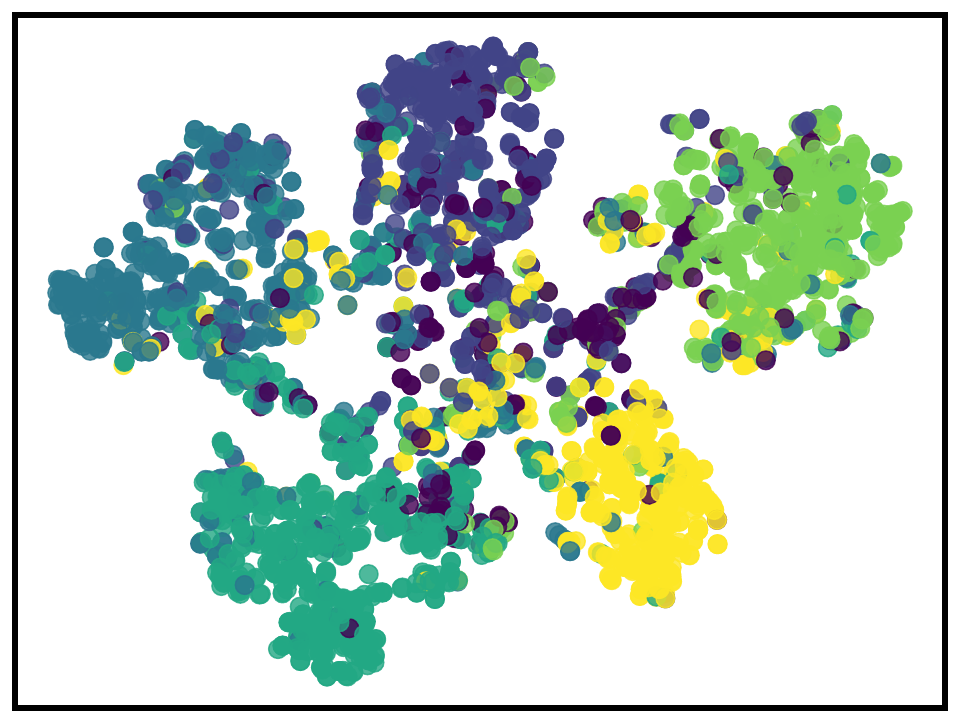}
    \caption{GraphLoRA}
    \label{fig:GraphLoRA_emb}
  \end{subfigure}
  \caption{Visualization of node embeddings on CiteSeer.}
  \label{fig:visualization}
\end{figure}

\section{Conclusion}
In this paper, we investigate the challenging problem of cross-graph transfer in graph neural networks. Inspired by the success of LoRA in fine-tuning large language models, we propose GraphLoRA, a parameter-efficient framework for fine-tuning pre-trained GNNs. Specifically, we introduce the node feature adaptation and structural knowledge transfer learning modules to address discrepancies in node feature distribution and graph structure between the source and target graphs. Additionally, a structure-aware regularization objective is proposed to improve adaptability in scenarios with limited labels. Theoretical analysis demonstrates that GraphLoRA has powerful representation capabilities and can fit any target GNN under mild conditions. Extensive experiments validate the effectiveness of GraphLoRA, even across disparate graph domains. Future work will focus on developing more efficient graph transfer learning methods to enhance computational efficiency and investigating its applicability to heterogeneous graphs, thereby broadening its generalizability to various graph types.

\begin{acks}
This work was supported by the National Natural Science Foundation of China (Grant No. 62276277 and 92370204), the Guangdong Basic and Applied Basic Research Foundation (Grant No. 2022B1515120059), the National Key R\&D Program of China (Grant No. 2023YFF0725004), the Guangzhou-HKUST(GZ) Joint Funding Program (Grant No. 2023A03J0008), and the Education Bureau of Guangzhou Municipality.
\end{acks}

\bibliographystyle{ACM-Reference-Format}
\bibliography{reference}

\appendix
\section{Theoretical Analysis}
\label{appendix:theory}
\subsection{Notations}
We define $[N] \coloneqq \{1,2,\dots,N\}$. The SVD decomposition of matrix $\boldsymbol{W}$ is given as $\boldsymbol{W}=\boldsymbol{UDV}^T$, and $\sigma _i(\boldsymbol{W}) =D_{i,i}$. The best rank-$r$ approximation (in the Frobenius norm or the 2-norm) of $\boldsymbol{W}$ is $\sum_{i=1}^r\sigma_i(\boldsymbol{W})\boldsymbol{u}_i\boldsymbol{v}_{i}^{T}$, where $\boldsymbol{u}_i$ and $\boldsymbol{v}_i$ are the $i$-th column of $\boldsymbol{U}$ and $\boldsymbol{V}$, respectively~\cite{DBLP:journals/corr/abs-2310-17513}. Following~\cite{DBLP:journals/corr/abs-2310-17513}, we define the best rank-$r$ approximation as $LR_r( \boldsymbol{W} )$. When $r\geqslant rank( \boldsymbol{W} )$, it is obvious that $LR_r( \boldsymbol{W} ) = \boldsymbol{W}$. Considering the differences in architectures among various GNNs, for the sake of analytical simplicity, we consider the following general GNN architecture:
\begin{align}
    \mathrm{Propagate}: \bar{\boldsymbol{H}}^l=&\boldsymbol{H}^{l-1}\boldsymbol{P}, \\
    \mathrm{Transform}: \boldsymbol{H}^l=&\mathrm{Re}LU( \boldsymbol{W}^l\bar{\boldsymbol{H}}^l+\boldsymbol{B}^l ), 
\end{align}
where $\boldsymbol{B}^l=\boldsymbol{b}^l\mathbf{1},\mathbf{1}=[1,1,\dots,1] \in \mathbb{R}^{1\times N}$, and $\boldsymbol{P}$ represents the message transformation matrix associated with the adjacency matrix. For simplicity, assume that $(\boldsymbol{W}^l)_{l=1}^{L} \in \mathbb{R}^{D \times D}$ and $(\boldsymbol{b}^l)_{l=1}^{L} \in \mathbb{R}^{D \times 1}$. 

We define an $L$-layer width-$D$ graph neural network as follows: 
$GNN_{L, D}(\cdot; (\boldsymbol{W}^l)_{l=1}^{L}, (\boldsymbol{b}^l)_{l=1}^{L}) \coloneqq \mathrm{ReLU}(\boldsymbol{W}^L \mathrm{ReLU}(\boldsymbol{W}^{L-1} \mathrm{ReLU}(\ldots) \boldsymbol{P} + \boldsymbol{B}^{L-1}) \boldsymbol{P} + \boldsymbol{B}^L)$. The target GNN $\overline{g}$, frozen GNN $g_0$, and adapted GNN $g$ are defined as follows: 
\begin{align}
    \overline{g} =& GNN_{\overline{L}, D}(\cdot; (\overline{\boldsymbol{W}}^l)_{l=1}^{\overline{L}}, (\overline{\boldsymbol{b}}^l)_{l=1}^{\overline{L}}), \\
    g_0 =& GNN_{L, D}(\cdot; (\boldsymbol{W}^l)_{l=1}^{L}, (\boldsymbol{b}^l)_{l=1}^{L}), \\
    g =& GNN_{L, D}(\cdot; (\boldsymbol{W}^l + \bigtriangleup \boldsymbol{W}^l)_{l=1}^{L}, (\hat{\boldsymbol{b}}^l)_{l=1}^{L}),
\end{align}
where $\overline{L} \leqslant L$. 

We define an $L$-layer width-$D$ fully connected neural network (FNN) as follows: $FNN_{L, D}(\cdot; (\boldsymbol{W}_l)_{l=1}^{L}, (\boldsymbol{b}_l)_{l=1}^{L}) \coloneqq \\ \mathrm{ReLU}(\boldsymbol{W}_L \mathrm{ReLU}(\boldsymbol{W}_{L-1} \mathrm{ReLU}(\ldots) + \boldsymbol{b}_{L-1}) + \boldsymbol{b}_L)$, where $(\boldsymbol{W}_l)_{l=1}^{L} \in \mathbb{R}^{D \times D}$ are weight matrices, and $(\boldsymbol{b}_l)_{l=1}^{L} \in \mathbb{R}^{D}$ are bias vectors. The target FNN $\overline{f}$, frozen FNN $f_0$, and adapted FNN $f$ are defined as: 
\begin{align}
    \overline{f} = & FNN_{\overline{L},D}(\cdot; (\overline{\boldsymbol{W}}_l)_{l=1}^{\overline{L}}, (\overline{\boldsymbol{b}}_l)_{l=1}^{\overline{L}}), \\
    f_0 = & FNN_{L,D}(\cdot; (\boldsymbol{W}_l)_{l=1}^{L}, (\boldsymbol{b}_l)_{l=1}^{L}), \\
    f = & FNN_{L,D}(\cdot; (\boldsymbol{W}_l + \bigtriangleup \boldsymbol{W}_l)_{l=1}^{L}, (\hat{\boldsymbol{b}}_l)_{l=1}^{L}),
\end{align}
where $\overline{L} \leqslant L$.

In addition, we define the partition $\mathcal{P} = \{ P_1, \cdots, P_{\overline{L}} \} = \{ \{ 1, \cdots, \\ M \}, \{ M+1, \cdots, 2M \}, \cdots, \{ ( \overline{L}-1 ) M+1, \cdots, L \} \}$, where $M = \lfloor L/\overline{L} \rfloor$.

\subsection{Expressive Power of Fully Connected Neural Networks with LoRA}
Before presenting the theoretical analysis of the expressive power of graph neural networks with LoRA, we introduce the relevant lemmas from~\cite{DBLP:journals/corr/abs-2310-17513} that discuss the expressive power of fully connected neural networks with LoRA.
\begin{assumption}
\label{assumption1} For a fixed rank $R\in [D] $, the weight matrices of the frozen model $( \boldsymbol{W}_l ) _{l=1}^{L}$ and matrices $( \prod_{l\in P_i}{\boldsymbol{W}_l} ) +  LR_r( \overline{\boldsymbol{W}}_i -  \prod_{l\in P_i}{\boldsymbol{W}_l} ) $ are non-singular for all $r\leqslant R( M-1 ) $ and $i \in [\overline{L}]$.
\end{assumption}

\begin{lemma}
\label{lemma1}
Let $( \overline{\boldsymbol{W}}_i ) _{i=1}^{\overline{L}}, ( \boldsymbol{W}_l ) _{l=1}^{L}\in \mathbb{R}^{D\times D}$ matrices whose elements are drawn independently from arbitrary continuous distributions. Then, with probability 1, Assumption~\ref{assumption1} holds $\forall R\in [D] $.
\end{lemma}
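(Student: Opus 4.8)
The plan is to reduce the probability-one claim to finitely many Lebesgue-null conditions and dispose of each by a genericity argument, the only delicate point being that $LR_r$ is not a polynomial map. Since $R\in[D]$, $i\in[\overline{L}]$, $l\in[L]$ and $r\leqslant R(M-1)$ each range over finite sets, it suffices to show that, for the jointly absolutely continuous tuple $\big((\overline{\boldsymbol{W}}_i)_{i=1}^{\overline{L}},(\boldsymbol{W}_l)_{l=1}^{L}\big)$, the events $\{\det\boldsymbol{W}_l=0\}$ and $\{\det(\boldsymbol{M}_i+LR_r(\overline{\boldsymbol{W}}_i-\boldsymbol{M}_i))=0\}$, with $\boldsymbol{M}_i\coloneqq\prod_{l\in P_i}\boldsymbol{W}_l$, each have probability zero; a finite union then yields Assumption~\ref{assumption1} for all $R$ simultaneously, with probability one. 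The first event is immediate, since $\det$ is a nonzero polynomial in the entries of $\boldsymbol{W}_l$; in particular, off a null set every $\boldsymbol{W}_l$ and hence every $\boldsymbol{M}_i$ is invertible. For the second, I would condition on $(\boldsymbol{W}_l)_{l\in P_i}$, so that $\boldsymbol{M}_i$ becomes a fixed (a.s. invertible) matrix $\boldsymbol{M}$ while $\overline{\boldsymbol{W}}_i$, being independent of the conditioned variables, still has a density; hence $\boldsymbol{\Delta}\coloneqq\overline{\boldsymbol{W}}_i-\boldsymbol{M}$ also has a density, and the task reduces to the deterministic claim: \emph{for every invertible $\boldsymbol{M}$, the set $B_r\coloneqq\{\boldsymbol{\Delta}\in\mathbb{R}^{D\times D}:\det(\boldsymbol{M}+LR_r(\boldsymbol{\Delta}))=0\}$ has Lebesgue measure zero.}

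For $r=0$ this holds because $LR_0\equiv\boldsymbol{0}$ and $\det\boldsymbol{M}\neq0$, and for $r\geqslant D$ because $LR_r$ is the identity and $\boldsymbol{\Delta}\mapsto\det(\boldsymbol{M}+\boldsymbol{\Delta})$ is a nonzero polynomial; the work is in the range $1\leqslant r\leqslant D-1$. There I would first remove the set $N\coloneqq\{\boldsymbol{\Delta}:\sigma_r(\boldsymbol{\Delta})=\sigma_{r+1}(\boldsymbol{\Delta})\}$, which is null because it lies in the zero set of the discriminant of the characteristic polynomial of $\boldsymbol{\Delta}^{T}\boldsymbol{\Delta}$, a polynomial in the entries of $\boldsymbol{\Delta}$ that is nonzero at, e.g., $\operatorname{diag}(1,2,\dots,D)$. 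On the open set $\mathbb{R}^{D\times D}\setminus N$ there is a positive gap $\sigma_r(\boldsymbol{\Delta})^2>\sigma_{r+1}(\boldsymbol{\Delta})^2$ which persists locally, so the Riesz projector $\boldsymbol{P}_r(\boldsymbol{\Delta})=\tfrac{1}{2\pi i}\oint_\Gamma(z\boldsymbol{I}-\boldsymbol{\Delta}\boldsymbol{\Delta}^{T})^{-1}\,dz$ onto the top-$r$ eigenspace of $\boldsymbol{\Delta}\boldsymbol{\Delta}^{T}$ depends real-analytically on $\boldsymbol{\Delta}$; since $LR_r(\boldsymbol{\Delta})=\boldsymbol{P}_r(\boldsymbol{\Delta})\boldsymbol{\Delta}$, the map $F(\boldsymbol{\Delta})\coloneqq\det(\boldsymbol{M}+LR_r(\boldsymbol{\Delta}))$ is real-analytic there. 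I would then argue that $\mathbb{R}^{D\times D}\setminus N$ is connected (shrink $\sigma_{r+1},\dots,\sigma_D$ toward $0$, use connectedness of the Stiefel manifold of $r$-frames in $\mathbb{R}^D$ with $r<D$ to reach $\operatorname{diag}(1,\dots,1,0,\dots,0)$, then reach full-rank matrices of either determinant sign with singular vectors taken in the connected group $SO(D)$), so that the real-analytic $F$ either vanishes identically or has a null zero set. Finally I would exhibit one point with $F\neq0$: for $\boldsymbol{\Delta}=\operatorname{diag}(d_1,\dots,d_D)$ with $|d_1|>\dots>|d_D|>0$ one has $LR_r(\boldsymbol{\Delta})=\operatorname{diag}(d_1,\dots,d_r,0,\dots,0)$, so $F(\boldsymbol{\Delta})=\det\!\big(\boldsymbol{M}+\operatorname{diag}(d_1,\dots,d_r,0,\dots,0)\big)$ is a polynomial in $(d_1,\dots,d_r)$ equal to $\det\boldsymbol{M}\neq0$ at the origin, hence nonzero for generic admissible $d_j$; thus $F\not\equiv0$ and $B_r$ is null.

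The main obstacle is precisely the non-polynomiality of $LR_r$: the reduction, the null-set bookkeeping and the choice of a non-vanishing point are routine, but making "genericity" rigorous for a map defined through the SVD requires either the real-analyticity-plus-connectedness argument above or, equivalently, passing to SVD coordinates $\boldsymbol{\Delta}=\boldsymbol{O}_1\operatorname{diag}(\sigma_1,\dots,\sigma_D)\boldsymbol{O}_2^{T}$ — an absolutely continuous change of variables on $\mathbb{R}^{D\times D}\setminus N$ with Jacobian proportional to $\prod_{i<j}(\sigma_i^2-\sigma_j^2)$, under which $LR_r(\boldsymbol{\Delta})=\boldsymbol{O}_1\operatorname{diag}(\sigma_1,\dots,\sigma_r,0,\dots,0)\boldsymbol{O}_2^{T}$ — and then invoking Fubini over $(\boldsymbol{O}_1,\boldsymbol{O}_2,\sigma_1,\dots,\sigma_r)$ together with the same polynomial-nonvanishing observation (for each fixed $(\boldsymbol{O}_1,\boldsymbol{O}_2)$ the determinant is a polynomial in $\sigma_1,\dots,\sigma_r$ equal to $\det\boldsymbol{M}\neq0$ at the origin). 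I would present whichever of the two is cleaner to write in full.
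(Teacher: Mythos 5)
Your proof is correct, but note that the paper does not actually prove Lemma~\ref{lemma1}: it imports the statement from \cite{DBLP:journals/corr/abs-2310-17513} (the sentence after Lemma~\ref{lemma3} points the reader to the proofs of Lemma~3, Theorem~3 and Theorem~5 there), so there is no in-paper argument to compare against. Judged on its own merits, your argument is sound and self-contained. The reduction to finitely many null events over $R$, $r$, $i$, the conditioning on $(\boldsymbol{W}_l)_{l\in P_i}$ to freeze $\boldsymbol{M}_i$, and the polynomial-genericity steps for $\det\boldsymbol{W}_l$ and for $r=0$, $r\geqslant D$ are routine. The substantive content is your treatment of $1\leqslant r\leqslant D-1$: excising $N=\{\sigma_r=\sigma_{r+1}\}$ (correctly bounded by the discriminant locus of the characteristic polynomial of $\boldsymbol{\Delta}^{T}\boldsymbol{\Delta}$, with a witness of non-vanishing), writing $LR_r(\boldsymbol{\Delta})=\boldsymbol{P}_r(\boldsymbol{\Delta})\boldsymbol{\Delta}$ via the Riesz projector to get real-analyticity of $F$, verifying connectedness of $\{\sigma_r>\sigma_{r+1}\}$ by an explicit path through $\operatorname{diag}(1,\dots,1,0,\dots,0)$, and ruling out $F\equiv 0$ at a diagonal witness. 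This is a rigorous way to handle the non-polynomiality of $LR_r$, which is indeed the only delicate point; note also that the identity $\boldsymbol{P}_r(\boldsymbol{\Delta})\boldsymbol{\Delta}=LR_r(\boldsymbol{\Delta})$ and the uniqueness of the best rank-$r$ approximation hold on all of $\{\sigma_r>\sigma_{r+1}\}$ even when the top $r$ singular values have further ties among themselves, so your excised set is exactly the right one. The SVD change-of-variables alternative with Jacobian $\prod_{i<j}(\sigma_i^2-\sigma_j^2)$ is equally valid.

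Two small caveats. First, ``elements drawn independently from arbitrary continuous distributions'' literally gives an atomless product law, not necessarily one that is absolutely continuous with respect to Lebesgue measure on $\mathbb{R}^{D\times D}$ (a Cantor-distributed entry is continuous but singular); your argument, like the source it replaces, silently upgrades this to ``has a joint density.'' That is certainly the intended reading, but it should be stated, since Lebesgue-nullity of $B_r$ alone does not bound its probability under a singular law. Second, when exhibiting the diagonal witness, the origin $(d_1,\dots,d_r)=0$ at which the polynomial equals $\det\boldsymbol{M}$ is not itself an admissible point (the $|d_j|$ must be strictly ordered and positive); you handle this correctly by observing that a nonzero polynomial is nonzero on an open dense set, which meets the open set of admissible tuples --- just make sure that step survives explicitly into the final write-up.
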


\begin{lemma}
\label{lemma2}
Under Assumption~\ref{assumption1}, if rank $R\geqslant \lceil \max_{i\in [ \overline{L} ]} rank( \overline{\boldsymbol{W}_i}-\prod_{l\in P_i}{\boldsymbol{W}_l} ) /M \rceil $, then there exists rank-$R$ or lower matrices $\bigtriangleup \boldsymbol{W}_1,\cdots ,\\ \bigtriangleup \boldsymbol{W}_L\in \mathbb{R}^{D\times D}$ and bias vectors $\hat{\boldsymbol{b}}_1,\cdots ,\hat{\boldsymbol{b}}_L\in \mathbb{R}^D$ such that the low-rank adapted model f can exactly approximate the target model $\overline{f}$, i.e., $f(\boldsymbol{x})=\overline{f}(\boldsymbol{x})$, $\forall \boldsymbol{x}\in \mathcal{X} $, where $\mathcal{X}$ is the input space.
\end{lemma}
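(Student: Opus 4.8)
The plan is to establish the result one block at a time. Following~\cite{DBLP:journals/corr/abs-2310-17513}, I would partition the $L$ layers of the frozen network $f_0$ into the $\overline{L}$ consecutive groups $P_1,\dots,P_{\overline{L}}$ of the partition $\mathcal{P}$, each (but possibly the last) of size $M=\lfloor L/\overline{L}\rfloor$, and choose the adapters so that each group, after adaptation, reproduces exactly one layer of $\overline{f}$. Concretely, I would argue by induction on $i\in[\overline{L}]$ that the adapted activation emitted by the last layer of $P_i$ equals the target activation $\overline{\boldsymbol{h}}^i=\mathrm{ReLU}(\overline{\boldsymbol{W}}_i\overline{\boldsymbol{h}}^{i-1}+\overline{\boldsymbol{b}}_i)$. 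This decouples the problem into two pieces: a purely linear matrix-product matching statement inside each group, and a device for neutralizing the $M-1$ intermediate $\mathrm{ReLU}$ nonlinearities so that a group acts as a single affine map followed by one genuine $\mathrm{ReLU}$.

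The core is the product-matching claim: given the frozen factors $(\boldsymbol{W}_l)_{l\in P_i}$ and target $\overline{\boldsymbol{W}}_i$, there exist updates $\bigtriangleup\boldsymbol{W}_l$ with $\mathrm{rank}(\bigtriangleup\boldsymbol{W}_l)\le R$ such that $\prod_{l\in P_i}(\boldsymbol{W}_l+\bigtriangleup\boldsymbol{W}_l)=\overline{\boldsymbol{W}}_i$. I would prove this by a \emph{sequential-correction} (telescoping) argument. Set $\boldsymbol{E}_i=\overline{\boldsymbol{W}}_i-\prod_{l\in P_i}\boldsymbol{W}_l$; the hypothesis $R\ge\lceil \mathrm{rank}(\boldsymbol{E}_i)/M\rceil$ is exactly what lets me split, via the SVD, $\boldsymbol{E}_i=\sum_{k=1}^{M}\boldsymbol{D}_k$ into $M$ pieces each of rank $\le R$. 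Modifying the $M$ factors one at a time and writing the resulting increment of the product as a rank-$\le R$ matrix sandwiched between the unmodified factors on one side and the already-chosen modified factors on the other, each $\bigtriangleup\boldsymbol{W}_l$ is recovered by left- and right-multiplying the corresponding $\boldsymbol{D}_k$ by the inverses of those flanking partial products. Since multiplication by nonsingular matrices preserves rank, each $\bigtriangleup\boldsymbol{W}_l$ stays rank $\le R$, and the increments telescope $\prod_{l\in P_i}\boldsymbol{W}_l$ to $\prod_{l\in P_i}\boldsymbol{W}_l+\boldsymbol{E}_i=\overline{\boldsymbol{W}}_i$. The non-singularity of every flanking partial product — the matrices $\prod_{l\in P_i}\boldsymbol{W}_l+LR_r(\boldsymbol{E}_i)$ arising along the way — is exactly what Assumption~\ref{assumption1} supplies, and Lemma~\ref{lemma1} guarantees it holds generically.

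To lift this linear statement to the full $\mathrm{ReLU}$ network I would exploit the bias vectors, which carry no rank constraint. Because $\mathcal{X}$ is bounded, all intermediate preactivations inside a group are bounded below, so adding a sufficiently large constant shift to the biases of the first $M-1$ layers of $P_i$ pushes those preactivations into the nonnegative orthant and makes each such $\mathrm{ReLU}$ act as the identity; the induced shift is then cancelled, and the target bias $\overline{\boldsymbol{b}}_i$ injected, through the bias $\hat{\boldsymbol{b}}$ of the group's last layer. The group therefore computes $\mathrm{ReLU}\!\big(\prod_{l\in P_i}(\boldsymbol{W}_l+\bigtriangleup\boldsymbol{W}_l)\,\overline{\boldsymbol{h}}^{i-1}+\overline{\boldsymbol{b}}_i\big)=\mathrm{ReLU}(\overline{\boldsymbol{W}}_i\overline{\boldsymbol{h}}^{i-1}+\overline{\boldsymbol{b}}_i)=\overline{\boldsymbol{h}}^i$, closing the induction and yielding $f(\boldsymbol{x})=\overline{f}(\boldsymbol{x})$ on all of $\mathcal{X}$. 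When $\overline{L}$ does not divide $L$ the last group holds more than $M$ layers; this only enlarges the rank budget, so its extra factors can be left essentially inert ($\bigtriangleup\boldsymbol{W}_l=\boldsymbol{0}$).

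I expect the main obstacle to be controlling the multiplicative structure in the product-matching step: unlike additive perturbations, perturbing each factor generates cross terms, so the delicate point is to organize the corrections so that each step alters the product by a clean rank-$\le R$ increment while simultaneously keeping every intermediate factor invertible, so that the next inverse exists. This is precisely the role of Assumption~\ref{assumption1}, and showing that the budget $MR$ is both sufficient and tight (the origin of the $\lceil\cdot/M\rceil$ in the hypothesis) is where the argument must be careful. A secondary, more technical point is the bookkeeping of the bias shifts propagating across layers and the appeal to boundedness of $\mathcal{X}$ to ensure a finite shift linearizes every intermediate $\mathrm{ReLU}$.
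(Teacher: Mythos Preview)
Your proposal is correct and follows precisely the approach of the cited reference~\cite{DBLP:journals/corr/abs-2310-17513}: the paper itself does not prove Lemma~\ref{lemma2} but defers entirely to that work (``The proofs of these lemmas can be found in~\cite{DBLP:journals/corr/abs-2310-17513}, specifically within the proofs of Lemma~3, Theorem~3, and Theorem~5''), and your block-wise induction, SVD splitting of $\boldsymbol{E}_i$ into $M$ rank-$\le R$ summands, telescoping sequential correction relying on Assumption~\ref{assumption1} for invertibility, and bias-shift linearization of the intermediate $\mathrm{ReLU}$s are exactly the ingredients of that proof.
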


\begin{lemma}
\label{lemma3}
Define $E_i=\sigma _{RM+1}( \overline{\boldsymbol{W}_i}-\prod_{l\in P_i}{\boldsymbol{W}_l} ) $, and $\xi = \\ \max ( \max_{i\in [ \overline{L} ]} ( \sqrt{\| \Sigma \| _F}\prod_{j=1}^i{\| \overline{\boldsymbol{W}}_j \| _F + \sum_{j=1}^i{\prod_{k=j+1}^{i-1}{\| \overline{\boldsymbol{W}}_k \| _F\| \overline{\boldsymbol{b}}_j \| _2}}} ) ,\\ \sqrt{\| \Sigma \| _F} ) $. Under Assumption~\ref{assumption1}, there exists rank-$R$ or lower matrices $\bigtriangleup ( \boldsymbol{W}_l ) _{l=1}^{L}\in \mathbb{R}^{D\times D}$ and bias vectors $( \hat{\boldsymbol{b}}_l ) _{l=1}^{L}\in \mathbb{R}^D$ for any input $\boldsymbol{x}\in \mathcal{X}$ with $\Sigma =\mathbb{E} \boldsymbol{xx}^T$, such that
\begin{align}
    \mathbb{E} \| f( \boldsymbol{x} ) -\overline{f}( \boldsymbol{x} ) \| _2\leqslant \xi \sum_{i=1}^{\overline{L}}{\max_{k\in [ \overline{L} ]}}( \| \overline{\boldsymbol{W}}_k \| _F+E_k ) ^{\overline{L}-i}E_i.
\end{align}
\end{lemma}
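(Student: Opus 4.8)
The plan is to turn the exact-realization argument behind Lemma~\ref{lemma2} into a quantitative one by tracking the residual that low-rank truncation leaves behind. I group the $L$ frozen layers of $f_0$ into the $\overline{L}$ blocks of the partition $\mathcal{P}$, so that block $P_i$ (of size $M$) is charged with reproducing the $i$-th target layer. Write $\boldsymbol{\Delta}_i := \overline{\boldsymbol{W}}_i - \prod_{l\in P_i}\boldsymbol{W}_l$ for the correction that block $i$ must implement at the level of its effective (composed) weight. Within a block I can inject a total rank of only $RM$ (one update of rank at most $R$ per layer), so the best effective weight the block can realize is $\widetilde{\boldsymbol{W}}_i := \prod_{l\in P_i}\boldsymbol{W}_l + LR_{RM}(\boldsymbol{\Delta}_i)$, whose deviation from the target is exactly the discarded spectral tail, $\|\widetilde{\boldsymbol{W}}_i-\overline{\boldsymbol{W}}_i\|_2 = \sigma_{RM+1}(\boldsymbol{\Delta}_i) = E_i$. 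This single identity is what converts the singular-value quantities $E_i$ into a per-block error.

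The first technical step is to synthesize these block weights. Invoking Assumption~\ref{assumption1} (the relevant products and their low-rank-corrected versions are non-singular), I peel off the layers of $P_i$ one at a time, at each step choosing an update $\bigtriangleup\boldsymbol{W}_l$ of rank at most $R$ so that the running product advances toward $\widetilde{\boldsymbol{W}}_i$, with non-singularity guaranteeing that the intermediate factors stay invertible and the construction does not stall. This is the same mechanism underlying Lemma~\ref{lemma2}; the only difference is that I stop at the rank-$RM$ approximant instead of assuming $RM\geqslant\mathrm{rank}(\boldsymbol{\Delta}_i)$. In parallel I choose the bias vectors $\hat{\boldsymbol{b}}_l$ so that the $M-1$ intermediate ReLUs inside each block operate in their identity regime while the block-terminal bias equals the target bias $\overline{\boldsymbol{b}}_i$; each block then collapses to the single affine-plus-ReLU map $\mathrm{ReLU}(\widetilde{\boldsymbol{W}}_i(\cdot)+\overline{\boldsymbol{b}}_i)$, matching target layer $i$ up to the weight error $E_i$.

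With a clean block-level description in hand, I set up an error recursion over blocks. Let $\boldsymbol{h}^i$ and $\overline{\boldsymbol{h}}^i$ denote the post-block-$i$ activations of $f$ and $\overline{f}$ with $\boldsymbol{h}^0=\overline{\boldsymbol{h}}^0=\boldsymbol{x}$, and put $\epsilon_i=\|\boldsymbol{h}^i-\overline{\boldsymbol{h}}^i\|_2$. Using that ReLU is $1$-Lipschitz, that the block biases coincide with $\overline{\boldsymbol{b}}_i$, and inserting the cross term $\widetilde{\boldsymbol{W}}_i\overline{\boldsymbol{h}}^{i-1}$, I obtain
\begin{align}
\epsilon_i &\leqslant \|\widetilde{\boldsymbol{W}}_i\|_2\,\epsilon_{i-1} + \|\widetilde{\boldsymbol{W}}_i-\overline{\boldsymbol{W}}_i\|_2\,\|\overline{\boldsymbol{h}}^{i-1}\|_2 \nonumber \\
&\leqslant (\|\overline{\boldsymbol{W}}_i\|_F+E_i)\,\epsilon_{i-1} + E_i\,\|\overline{\boldsymbol{h}}^{i-1}\|_2,
\end{align}
where the last step uses $\|\widetilde{\boldsymbol{W}}_i\|_2\leqslant\|\overline{\boldsymbol{W}}_i\|_2+E_i\leqslant\|\overline{\boldsymbol{W}}_i\|_F+E_i$. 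Unrolling this first-order recursion from $\epsilon_0=0$ gives $\epsilon_{\overline{L}}\leqslant\sum_{i=1}^{\overline{L}}\big(\prod_{k=i+1}^{\overline{L}}(\|\overline{\boldsymbol{W}}_k\|_F+E_k)\big)E_i\,\|\overline{\boldsymbol{h}}^{i-1}\|_2$, and bounding each product by $\max_{k\in[\overline{L}]}(\|\overline{\boldsymbol{W}}_k\|_F+E_k)^{\overline{L}-i}$ reproduces the structural factor in the claim.

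It then remains to control the target activation norms. I bound $\|\overline{\boldsymbol{h}}^{i-1}\|_2$ by unrolling the target's own forward recursion $\|\overline{\boldsymbol{h}}^j\|_2\leqslant\|\overline{\boldsymbol{W}}_j\|_F\|\overline{\boldsymbol{h}}^{j-1}\|_2+\|\overline{\boldsymbol{b}}_j\|_2$ (again via $1$-Lipschitz ReLU), which produces precisely the two competing terms in the definition of $\xi$; taking expectations and using $\mathbb{E}\|\boldsymbol{x}\|_2\leqslant\sqrt{\mathbb{E}\|\boldsymbol{x}\|_2^2}$ together with the $\Sigma$-dependent bound yields $\mathbb{E}\|\overline{\boldsymbol{h}}^{i-1}\|_2\leqslant\xi$ for every $i$. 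Substituting into the unrolled inequality and taking expectations delivers the stated bound. I expect the main obstacle to be the first technical step: rigorously realizing $\widetilde{\boldsymbol{W}}_i$ as the product of $M$ updated frozen factors (each of rank at most $R$) while simultaneously forcing the intermediate ReLUs to act linearly, since on an unbounded input domain the identity-regime trick must be justified carefully (e.g.\ by restricting to a compact $\mathcal{X}$ or via a shift-and-subtract construction) so that no ReLU-induced error contaminates the clean singular-value accounting. By contrast, the error-propagation bookkeeping is routine once the block reduction is established.
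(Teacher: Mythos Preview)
The paper does not actually prove Lemma~\ref{lemma3}; it is quoted verbatim from Zeng and Lee (arXiv:2310.17513) and the authors simply point the reader to the proofs of Lemma~3, Theorem~3, and Theorem~5 in that reference. So there is no in-paper argument to compare against.

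That said, your reconstruction tracks the approach of the cited source closely: partition the $L$ frozen layers into $\overline{L}$ blocks, use the non-singularity in Assumption~\ref{assumption1} to distribute a rank-$RM$ correction across the $M$ layers of each block so that the block's effective weight becomes $\widetilde{\boldsymbol{W}}_i$ with residual $E_i$, force the intermediate ReLUs to pass through linearly, and then run a $1$-Lipschitz error recursion over blocks while bounding the target activations by the forward recursion that produces $\xi$. The point you flag as the main obstacle---neutralizing the interior ReLUs so that the block really collapses to a single affine-plus-ReLU map---is exactly the step where the cited proof does its nontrivial work, so your assessment of where the difficulty lies is accurate.
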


The proofs of these lemmas can be found in~\cite{DBLP:journals/corr/abs-2310-17513}, specifically within the proofs of Lemma 3, Theorem 3, and Theorem 5.

\subsection{Expressive Power of Graph Neural Networks with LoRA}
\begin{assumption}
\label{assumption2} For a fixed rank $R\in [D] $, the weight matrices of the frozen model $( \boldsymbol{W}^l )_{l=1}^{L}$ and matrices $( \prod_{l\in P_i}{\boldsymbol{W}^l} ) +  LR_r( \overline{\boldsymbol{W}}^i -  \prod_{l\in P_i}{\boldsymbol{W}^l} ) $ are non-singular for all $r\leqslant R( M-1 ) $ and $i \in [\overline{L}]$.
\end{assumption}

\begin{theorem}
\label{theorem1}
Under Assumption~\ref{assumption2}, if rank $R\geqslant \lceil \max_{i\in [ \overline{L} ]} rank( \overline{\boldsymbol{W}}^i-\prod_{l\in P_i}{\boldsymbol{W}^l} ) /M \rceil $, then there exists rank-$R$ or lower matrices $\bigtriangleup \boldsymbol{W}^1,\cdots ,\\ \bigtriangleup \boldsymbol{W}^L\in \mathbb{R}^{D\times D}$ and bias vectors $\hat{\boldsymbol{b}}^1,\cdots ,\hat{\boldsymbol{b}}^L\in \mathbb{R}^{D\times 1}$ such that the low-rank adapted model $g$ can exactly approximate the target model $\overline{g}$, i.e., $g(\boldsymbol{X}) =\overline{g}(\boldsymbol{X})$, $\forall \boldsymbol{X}\in \mathcal{X}^{'} $, where $\mathcal{X}^{'}$ is the input space.
\end{theorem}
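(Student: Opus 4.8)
The plan is to derive Theorem~\ref{theorem1} from its fully-connected counterpart, Lemma~\ref{lemma2}, by observing that the only objects a GNN with LoRA is free to perturb are the weight matrices $\boldsymbol{W}^l$, while the propagation matrix $\boldsymbol{P}$ and the broadcast bias structure $\boldsymbol{B}^l=\boldsymbol{b}^l\mathbf{1}$ are identical across the target $\overline{g}$, the frozen $g_0$, and the adapted $g$. Hence the construction behind Lemma~\ref{lemma2} should transfer, provided one tracks how $\boldsymbol{P}$ threads through the layers of a block. First I would fix the partition $\mathcal{P}=\{P_1,\dots,P_{\overline{L}}\}$ with $M=\lfloor L/\overline{L}\rfloor$, exactly as in the FNN proof, and reduce the claim to the following per-block statement: for each $i\in[\overline{L}]$ there exist rank-$\leqslant R$ updates $\{\bigtriangleup\boldsymbol{W}^l\}_{l\in P_i}$ and biases $\{\hat{\boldsymbol{b}}^l\}_{l\in P_i}$ such that composing the $|P_i|$ adapted GNN layers of block $i$ reproduces, on the relevant iterates, the action of the single target layer $\bar{\boldsymbol{H}}\mapsto\mathrm{ReLU}(\overline{\boldsymbol{W}}^i\bar{\boldsymbol{H}}\boldsymbol{P}+\overline{\boldsymbol{B}}^i)$.

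Within a block I would use the same two ingredients as the FNN argument. (i) \emph{The ReLU-identity trick}: on the (bounded) input space $\mathcal{X}'$, choose the bias of the first $|P_i|-1$ layers of the block large enough that every entry of their pre-activation is positive, so those ReLUs act as the identity and each such layer contributes only its affine part $\bar{\boldsymbol{H}}\mapsto(\boldsymbol{W}^l+\bigtriangleup\boldsymbol{W}^l)\bar{\boldsymbol{H}}\boldsymbol{P}+\hat{\boldsymbol{B}}^l$; boundedness of all iterates along the network is automatic since each layer is a fixed linear map (right-multiplication by the fixed $\boldsymbol{P}$) composed with an affine map and a ReLU. The last layer of the block then performs the genuine nonlinearity, its bias chosen to cancel the constants accumulated by the preceding layers and to leave exactly $\overline{\boldsymbol{B}}^i$. (ii) \emph{Low-rank splitting of the weight discrepancy}: Assumption~\ref{assumption2} (the GNN analogue of Assumption~\ref{assumption1}) guarantees that $\big(\prod_{l\in P_i}\boldsymbol{W}^l\big)+LR_r\big(\overline{\boldsymbol{W}}^i-\prod_{l\in P_i}\boldsymbol{W}^l\big)$ is non-singular for all $r\leqslant R(M-1)$, which is exactly what is needed to write the correction $\overline{\boldsymbol{W}}^i-\prod_{l\in P_i}\boldsymbol{W}^l$ as a sum of $|P_i|$ matrices of rank at most $\big\lceil\mathrm{rank}(\overline{\boldsymbol{W}}^i-\prod_{l\in P_i}\boldsymbol{W}^l)/M\big\rceil\leqslant R$, one per layer, so that the product of the updated weights in the block equals $\overline{\boldsymbol{W}}^i$. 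Splicing the $\overline{L}$ blocks together and peeling off the bias bookkeeping yields $g(\boldsymbol{X})=\overline{g}(\boldsymbol{X})$ for all $\boldsymbol{X}\in\mathcal{X}'$; one may additionally invoke the GNN analogue of Lemma~\ref{lemma1} to show Assumption~\ref{assumption2} holds with probability one for generic weights.

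The step I expect to be the crux is the interaction of the weight updates with the propagation matrix inside a block. Unlike an FNN layer, every GNN layer inserts a right-multiplication by $\boldsymbol{P}$ \emph{between} consecutive weight multiplications, so the affine part of a block of $M$ layers is not $\bar{\boldsymbol{H}}\mapsto\big(\prod_{l\in P_i}(\boldsymbol{W}^l+\bigtriangleup\boldsymbol{W}^l)\big)\bar{\boldsymbol{H}}$ but $\bar{\boldsymbol{H}}\mapsto\big(\prod_{l\in P_i}(\boldsymbol{W}^l+\bigtriangleup\boldsymbol{W}^l)\big)\bar{\boldsymbol{H}}\boldsymbol{P}^{\,|P_i|}$, whereas the target layer applies $\boldsymbol{P}$ only once; likewise the accumulated bias terms pick up row factors $\mathbf{1}\boldsymbol{P}^{k}$ rather than $\mathbf{1}$. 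Reconciling this bookkeeping — making the block's net propagation agree with the target's and checking that the surplus $\boldsymbol{P}$ factors do not inflate the rank demanded of the updates — is the delicate part of the argument, and it is where structural properties of $\boldsymbol{P}$ enter: for the PPR diffusion matrix $\boldsymbol{P}$ is column-stochastic (so $\mathbf{1}\boldsymbol{P}^{k}=\mathbf{1}$, keeping the bias bookkeeping one-dimensional and hence correctable by the last layer's bias) and invertible (so the $\boldsymbol{P}$-powers can be pushed through the construction without disturbing the non-singularity hypotheses), which is precisely what lets the reduction to Lemma~\ref{lemma2} go through.
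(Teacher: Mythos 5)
Your overall strategy is the same as the paper's: identify $\boldsymbol{W}_l$ with $\boldsymbol{W}^l$ and $\overline{\boldsymbol{W}}_i$ with $\overline{\boldsymbol{W}}^i$, observe that Assumption~\ref{assumption2} plays the role of Assumption~\ref{assumption1}, and import the block construction behind Lemma~\ref{lemma2} (ReLU linearization of the first $|P_i|-1$ layers of each block via large biases, plus a rank-$R$ splitting of $\overline{\boldsymbol{W}}^i-\prod_{l\in P_i}\boldsymbol{W}^l$ across the layers of the block). The paper's own proof is exactly this reduction, stated in three sentences together with the assertion that the differences between the GNN and FNN recursions "do not affect the proof."

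Where you go beyond the paper is in flagging the interaction with the propagation matrix, and you have put your finger on precisely the right spot: after linearizing the interior ReLUs, a block of $|P_i|$ adapted GNN layers computes $\bigl(\prod_{l\in P_i}(\boldsymbol{W}^l+\bigtriangleup \boldsymbol{W}^l)\bigr)\boldsymbol{H}\boldsymbol{P}^{|P_i|}+\cdots$, whereas the target layer computes $\overline{\boldsymbol{W}}^i\boldsymbol{H}\boldsymbol{P}+\overline{\boldsymbol{B}}^i$. However, your proposed resolution of this mismatch is where the genuine gap lies. Invertibility of $\boldsymbol{P}$ does not help: the only tunable quantities, the $\bigtriangleup \boldsymbol{W}^l$, act by \emph{left} multiplication on $\boldsymbol{H}$, while the surplus $\boldsymbol{P}$ factors sit on the \emph{right}, so requiring $\boldsymbol{V}\boldsymbol{H}\boldsymbol{P}^{|P_i|}=\overline{\boldsymbol{W}}^i\boldsymbol{H}\boldsymbol{P}$ for all $\boldsymbol{H}$ in a spanning set of $\mathbb{R}^{D\times N}$ forces $\boldsymbol{P}^{|P_i|-1}$ to be a scalar multiple of the identity, which fails for any nontrivial diffusion or normalized-adjacency operator. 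Consequently the reduction to Lemma~\ref{lemma2} goes through cleanly only when every block is a singleton, i.e.\ $L=\overline{L}$; for $L>\overline{L}$ one needs either an additional hypothesis on $\boldsymbol{P}$ or a restriction of $\mathcal{X}^{'}$, neither of which appears in the statement. Your column-stochasticity remark ($\mathbf{1}\boldsymbol{P}^{k}=\mathbf{1}$) does correctly dispose of the bias bookkeeping for the column-stochastic PPR transition matrix, but note that the paper's closed-form diffusion uses the symmetric normalization $\boldsymbol{D}^{-1/2}\boldsymbol{A}\boldsymbol{D}^{-1/2}$, which is not column-stochastic, so even that step silently adds an assumption. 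To be fair, the paper's own proof is silent on all of this; your write-up is the more careful of the two, but as proposed it does not close the theorem either.
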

\begin{proof}
Revisiting the GNN expression: $\boldsymbol{H}^l = \mathrm{ReLU}(\boldsymbol{W}^l\boldsymbol{H}^{l-1}\boldsymbol{P} + \boldsymbol{B}^l)$, we observe that the GNN expression resembles the FNN expression, but with two key differences: (1) the FNN input is a single sample, while the GNN input includes all samples; (2) the GNN requires message passing at each layer: $\bar{\boldsymbol{H}}^l = \boldsymbol{H}^{l-1}\boldsymbol{P}$. We set $\boldsymbol{W}_l=\boldsymbol{W}^l$ for $l\in [L] $ and $\overline{\boldsymbol{W}}_i=\overline{\boldsymbol{W}}^i$ for $i\in [ \overline{L} ] $. Under Assumption~\ref{assumption2}, $( \boldsymbol{W}_l ) _{l=1}^{L}$ and $( \overline{\boldsymbol{W}}_i ) _{i=1}^{\overline{L}}$ satisfy Assumption~\ref{assumption1}. Despite the aforementioned differences, the proof process of Lemma~\ref{lemma2}~\cite{DBLP:journals/corr/abs-2310-17513} demonstrates that these differences do not affect the proof of the conclusion. Consequently, we can easily deduce that Theorem~\ref{theorem1} holds.
\end{proof}

It is noteworthy that, based on Lemma~\ref{lemma1}, Assumption~\ref{assumption2} holds in most cases. Revisiting the conditions for Theorem~\ref{theorem1} to hold, if $R$ fails to meet the condition, can we offer an approximate upper bound on the difference between $g(\boldsymbol{X})$ and $\overline{g}(\boldsymbol{X})$? The answer is affirmative.

\begin{theorem}
\label{theorem2}
Define $E^i=\sigma _{RM+1}( \overline{\boldsymbol{W}}^i-\prod_{l\in P_i}{\boldsymbol{W}^l} ) $, and $\xi^{'} = \max ( \max_{i\in [ \overline{L} ]} ( \mathbb{E} \| \boldsymbol{X} \| _2\prod_{j=1}^i{\| \overline{\boldsymbol{W}}^j} \| _2\| \boldsymbol{P} \| _{2}^{i} + \sum_{j=1}^i{\prod_{k=j+1}^{i-1}{\| \overline{\boldsymbol{W}}^k \| _2}} \\ \| \overline{\boldsymbol{B}}^j \| _2\| \boldsymbol{P} \| _{2}^{i-j-1} ) ,\mathbb{E} \| \boldsymbol{X} \| _2 ) $. Under Assumption~\ref{assumption2}, there exists rank-$R$ or lower matrices $\bigtriangleup ( \boldsymbol{W}^l ) _{l=1}^{L}\in \mathbb{R}^{D\times D}$ and bias vectors $( \hat{\boldsymbol{b}}^l )_{l=1}^{L}\in \mathbb{R}^{D\times 1}$ for any input $\boldsymbol{X}\in \mathcal{X}^{'}$, such that
\begin{align}
\small
    \mathbb{E} \| g( \boldsymbol{X} ) -\overline{g}( \boldsymbol{X} ) \| _2\leqslant \xi^{'} \sum_{i=1}^{\overline{L}}{\max_{k\in [ \overline{L} ]}}( \| \overline{\boldsymbol{W}}^k \| _2+E^k ) ^{\overline{L}-i}E^i\| \boldsymbol{P} \| _{2}^{\overline{L}-i+1}.
\end{align}
\end{theorem}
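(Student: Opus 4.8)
The plan is to reuse the low-rank construction already invoked for Theorem~\ref{theorem1} (equivalently, the construction from~\cite{DBLP:journals/corr/abs-2310-17513} underlying Lemma~\ref{lemma3}) and to propagate the per-block approximation error through the $\overline{L}$ effective blocks of $g$, now keeping track of the factor of $\|\boldsymbol{P}\|$ that each propagation step contributes; unlike the exact case of Theorem~\ref{theorem1}, where these factors are vacuous, here they accumulate and produce the powers $\|\boldsymbol{P}\|^{\overline{L}-i+1}$ and the $\|\boldsymbol{P}\|$-powers inside $\xi'$. Concretely, I would first set $\boldsymbol{W}_l:=\boldsymbol{W}^l$ and $\overline{\boldsymbol{W}}_i:=\overline{\boldsymbol{W}}^i$, so that Assumption~\ref{assumption2} implies Assumption~\ref{assumption1} for these matrices and the low-rank updates $\bigtriangleup\boldsymbol{W}^l$ (each of rank at most $R$) and biases $\hat{\boldsymbol{b}}^l$ of~\cite{DBLP:journals/corr/abs-2310-17513} are available. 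On the partition $\mathcal{P}=\{P_1,\dots,P_{\overline{L}}\}$ this construction forces the ReLU activations inside each block to act as the identity --- here one uses that $\mathcal{X}^{'}$ is bounded and that the PPR matrix $\boldsymbol{P}$ is entrywise nonnegative, so that a single large positive shift keeps every pre-activation positive even across the propagations interleaved within a block --- making the effective $i$-th block an affine map whose weight differs from the target $\overline{\boldsymbol{W}}^i$ by a matrix of norm at most $E^i=\sigma_{RM+1}(\overline{\boldsymbol{W}}^i-\prod_{l\in P_i}\boldsymbol{W}^l)$, with block bias $\overline{\boldsymbol{B}}^i$.

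With the construction in hand, write $\boldsymbol{H}^{(i)}$ and $\overline{\boldsymbol{H}}^{(i)}$ for the outputs of $g$ after its $i$-th block and of $\overline{g}$ after its $i$-th layer, with $\boldsymbol{H}^{(0)}=\overline{\boldsymbol{H}}^{(0)}=\boldsymbol{X}$. Using that ReLU is nonexpansive (and $\mathrm{ReLU}(0)=0$) together with submultiplicativity of the matrix norm, I would derive a recursion
\[
 \mathbb{E}\bigl\|\boldsymbol{H}^{(i)}-\overline{\boldsymbol{H}}^{(i)}\bigr\|\;\le\;a_i\,\mathbb{E}\bigl\|\boldsymbol{H}^{(i-1)}-\overline{\boldsymbol{H}}^{(i-1)}\bigr\|+b_i ,
\]
where $a_i$ equals $\|\overline{\boldsymbol{W}}^i\|$ times the power of $\|\boldsymbol{P}\|$ picked up in block $i$, and $b_i$ is bounded by a constant multiple of $E^i\,\|\boldsymbol{P}\|\,\mathbb{E}\|\boldsymbol{H}^{(i-1)}\|$. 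Unrolling from $i=\overline{L}$ down to $i=1$, overestimating each surviving product $\prod_{k>i}\|\overline{\boldsymbol{W}}^k\|$ by $\max_k(\|\overline{\boldsymbol{W}}^k\|+E^k)^{\overline{L}-i}$ and collecting the accumulated $\|\boldsymbol{P}\|$-factors into $\|\boldsymbol{P}\|^{\overline{L}-i+1}$ for the $i$-th summand, yields the claimed sum up to the scalar $\mathbb{E}\|\boldsymbol{H}^{(i-1)}\|$.

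It remains to bound $\mathbb{E}\|\boldsymbol{H}^{(i-1)}\|\le \mathbb{E}\|\overline{\boldsymbol{H}}^{(i-1)}\|+\mathbb{E}\|\boldsymbol{H}^{(i-1)}-\overline{\boldsymbol{H}}^{(i-1)}\|$, and to unroll the target network's own norm recursion $\|\overline{\boldsymbol{H}}^{(j)}\|\le\|\overline{\boldsymbol{W}}^j\|\,\|\boldsymbol{P}\|\,\|\overline{\boldsymbol{H}}^{(j-1)}\|+\|\overline{\boldsymbol{B}}^j\|$ (again by nonexpansiveness of ReLU and $\mathrm{ReLU}(0)=0$); this reproduces exactly the two families of terms in the definition of $\xi^{'}$, namely $\mathbb{E}\|\boldsymbol{X}\|\prod_{j\le i}\|\overline{\boldsymbol{W}}^j\|\,\|\boldsymbol{P}\|^i$ and $\sum_j\prod_{j<k<i}\|\overline{\boldsymbol{W}}^k\|\,\|\overline{\boldsymbol{B}}^j\|\,\|\boldsymbol{P}\|^{i-j-1}$, while the standalone $\mathbb{E}\|\boldsymbol{X}\|$ in $\xi^{'}$ covers the $i=1$ term where $\boldsymbol{H}^{(0)}=\boldsymbol{X}$. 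The feedback term $\mathbb{E}\|\boldsymbol{H}^{(i-1)}-\overline{\boldsymbol{H}}^{(i-1)}\|$ is absorbed exactly as in the proof of Lemma~\ref{lemma3}, leaving a geometric sum that telescopes into the stated inequality.

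\textbf{Main obstacle.} The crux is blocks of size $M>1$: one must verify that the ReLU-linearization device still works when a multiplication by $\boldsymbol{P}$ is inserted between consecutive ReLUs (this is where nonnegativity of $\boldsymbol{P}$ and boundedness of $\mathcal{X}^{'}$ are essential), and one must count precisely how many factors of $\boldsymbol{P}$ each block contributes so that the exponents come out as $\|\boldsymbol{P}\|^{\overline{L}-i+1}$ and the powers inside $\xi^{'}$, rather than blowing up with $M$; reconciling the several propagation steps per block of $g$ with the single step per layer of $\overline{g}$ is the most delicate part, and is precisely the point that Theorem~\ref{theorem1} already settles for the exact case. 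When $\overline{L}=L$, i.e.\ $M=1$, everything simplifies: each layer is approximated directly with error $\sigma_{R+1}(\overline{\boldsymbol{W}}^l-\boldsymbol{W}^l)$ and the recursion above is immediate, so I would present that case first as a warm-up. A secondary, purely bookkeeping point is to fix the matrix norm so that ReLU remains nonexpansive and submultiplicativity holds uniformly.
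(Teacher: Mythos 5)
Your proposal follows essentially the same route as the paper's proof, which simply sets $\boldsymbol{W}_l=\boldsymbol{W}^l$ and $\overline{\boldsymbol{W}}_i=\overline{\boldsymbol{W}}^i$, invokes Assumption~\ref{assumption2} to satisfy Assumption~\ref{assumption1}, and then repeats the argument of Lemma~\ref{lemma3} from~\cite{DBLP:journals/corr/abs-2310-17513} with the GNN expression substituted for the FNN expression. Your write-up is in fact considerably more explicit than the paper's two-line proof about where the $\|\boldsymbol{P}\|_2$ factors accumulate and about the delicacy of interleaving propagation steps inside blocks of size $M>1$ --- points the paper leaves entirely to "it is easy to deduce" --- but the underlying strategy is identical.
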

\begin{proof}
Similarly, we set $\boldsymbol{W}_l=\boldsymbol{W}^l$ for $l\in [L] $ and $\overline{\boldsymbol{W}}_i=\overline{\boldsymbol{W}}^i$ for $i\in [ \overline{L} ] $. Under Assumption~\ref{assumption2}, it follows that $( \boldsymbol{W}_l ) _{l=1}^{L}$ and $( \overline{\boldsymbol{W}}_i ) _{i=1}^{\overline{L}}$ satisfy Assumption~\ref{assumption1}. Following the proof of Lemma~\ref{lemma3}~\cite{DBLP:journals/corr/abs-2310-17513} and substituting the GNN expression for the FNN expression in the proof of Lemma~\ref{lemma3}~\cite{DBLP:journals/corr/abs-2310-17513}, it is easy to deduce that Theorem~\ref{theorem2} holds.
\end{proof}

\section{Datasets}
\label{appendix:data}
\subsection{Dataset Statistics}
Detailed statistics of the datasets are provided in \tablename~\ref{tab:stat_detail}.

\begin{table}[h]
\caption{Statistics of datasets.}
\label{tab:stat_detail}
\begin{tabular}{@{}c|cccc@{}}
\toprule
Dataset  & \#Nodes & \#Edges & \#Features & \#Classes \\ \midrule
PubMed   & 19,717   & 88,651   & 500        & 3         \\
CiteSeer & 3,327    & 9,228    & 3,703       & 6         \\
Cora     & 2,708    & 10,556   & 1,433       & 7         \\
Photo    & 7,650    & 238,163  & 745        & 8         \\
Computer & 13,752   & 491,722  & 767        & 10        \\
Reddit   & 232,965 & 114,615,892  & 602   & 41        \\
ogbn-arxiv   & 169,343 & 1,166,243  & 128   & 40        \\
ogbn-products   & 2,449,029 &  61,859,140  & 100   & 47        \\
Squirrel   & 5201 & 217073  & 2089   & 5        \\
Chameleon   & 2277 & 36101  & 2325   & 5        \\ \bottomrule
\end{tabular}
\end{table}

\subsection{Dataset Descriptions}
\begin{itemize}
  \item \textbf{PubMed, CiteSeer, and Cora}. PubMed, CiteSeer, and Cora are three standard citation network benchmark datasets. In these datasets, the nodes correspond to research papers, and the edges represent the citations between papers. The node features are derived from the bag-of-words representation of the papers, while the node labels indicate the academic topics of the papers.
  \item \textbf{Photo and Computer}. The Photo and Computer datasets are segments of the Amazon co-purchase graph~\cite{DBLP:conf/sigir/McAuleyTSH15}. In these datasets, the nodes represent products, while the edges represent the frequent co-purchasing relationship between two products. The node features are derived from the bag-of-words representation of product reviews, and the labels indicate the categories of the products.
  \item \textbf{Reddit}. The Reddit dataset is constructed from the Reddit online discussion forum. In this dataset, nodes represent posts, while edges indicate instances where the same user has commented on both connecting posts. The node features are derived from the GloVe CommonCrawl word representation of posts~\cite{DBLP:conf/emnlp/PenningtonSM14}, while labels indicate the community to which the post belongs.
  \item \textbf{ogbn}. The ogbn-arxiv dataset is a citation network of arXiv papers, where each node represents a paper and edges denote citations. Features are derived by averaging the word embeddings from the title and abstract of each paper, and labels correspond to the subject areas of arXiv papers. The ogbn-products dataset represents an Amazon product co-purchasing network, where each node represents a product, and edges indicate products purchased together. Node features are derived from a bag-of-words representation of the product descriptions, and labels correspond to the product categories.
  \item \textbf{Squirrel and Chameleon}. Squirrel and Chameleon are two page-page networks in Wikipedia, where nodes represent web pages and edges represent mutual links between pages. Node features correspond to several informative nouns in the pages, and labels correspond to the average monthly traffic of the web pages.
\end{itemize}

\begin{table*}[!t]
\caption{Performance on heterogeneous graphs.}
\label{tab:hete}
\setlength{\tabcolsep}{1mm}
\resizebox{\linewidth}{!}{
\begin{tabular}{c|cccccccccccccc}
\hline
Method    & GCN      & GAT      & COSTA    & CCA-SSG  & HomoGCL  & GRACE    & GPPT     & GPF      & GraphPrompt & ProG     & GTOT     & AdapterGNN & GraphControl & GraphLoRA \\ \hline
Squirrel  & 37.3±0.6 & 33.3±1.3 & 41.6±1.3 & 33.2±1.0 & 32.5±1.2 & 28.5±0.8 & 32.7±0.5 & 31.8±1.1 & 27.6±0.8    & 30.8±1.3 & 40.6±1.7 & 35.7±1.2   & 41.3±1.5     & 35.2±1.0  \\
Chameleon & 58.4±1.0 & 53.6±2.0 & 58.2±1.9 & 47.3±2.2 & 46.3±1.8 & 46.7±2.2 & 52.7±1.0 & 52.2±1.1 & 40.6±2.0    & 49.0±2.4 & 54.5±2.7 & 54.9±2.1   & 55.4±1.9     & 57.5±2.9  \\ \hline
\end{tabular}}
\end{table*}

\begin{table*}[!t]
\caption{Performance with different pretraining methods. The notations "-PM," "-CS," "-C," "-P," and "-Com" represent the pre-training datasets PubMed, CiteSeer, Cora, Photo, and Computer, respectively. GraphLoRA(CS) and GraphLoRA(HG) represent the application of the CCA-SSG and HomoGCL pretraining methods, respectively.}
\label{tab:diff_pretrain}
\resizebox{\textwidth}{!}{
\begin{tabular}{@{}c|cccccccccc@{}}
\toprule
\multirow{2}{*}{Method} & \multicolumn{2}{c}{PubMed} & \multicolumn{2}{c}{CiteSeer} & \multicolumn{2}{c}{Cora} & \multicolumn{2}{c}{Photo} & \multicolumn{2}{c}{Computer} \\ \cmidrule(l){2-3} \cmidrule(l){4-5} \cmidrule(l){6-7} \cmidrule(l){8-9} \cmidrule(l){10-11}
& public & 10-shot & public & 10-shot & public & 10-shot & public & 10-shot & public & 10-shot \\ \midrule
CCA-SSG-PM  & 80.12±1.13 & 75.03±1.28 & 59.48±2.58 & 52.37±3.42 & 73.30±0.12 & 63.68±2.08 & 92.13±0.24 & 83.27±0.99 & 87.15±0.35 & 75.14±0.85 \\
CCA-SSG-CS  & 75.12±0.79 & 55.95±2.97 & 72.26±1.14 & 31.95±0.40 & 73.90±1.40 & 31.09±0.98 & 92.34±0.32 & 85.90±0.37 & 86.90±0.50 & 75.53±0.59 \\
CCA-SSG-C   & 69.52±1.37 & 62.07±3.77 & 56.22±2.47 & 45.49±1.97 & \textbf{82.56±0.57} & 76.78±1.78 & 92.10±0.33 & 85.23±0.69 & 87.85±0.41 & 74.92±1.08 \\
CCA-SSG-P   & 72.04±1.41 & 69.05±2.96 & 57.74±1.66 & 51.12±4.11 & 68.48±2.18 & 57.37±1.91 & 92.82±0.22 & 85.25±0.74 & 87.97±0.54 & 71.54±0.88 \\
CCA-SSG-Com & 72.42±1.97 & 69.22±2.96 & 63.82±1.15 & 56.34±2.44 & 63.82±1.75 & 57.54±3.48 & 92.74±0.29 & 82.95±1.98 & \uline{88.08±0.35} & 75.32±1.76 \\ \hline
HomoGCL-PM  & 79.46±0.11 & \textbf{77.22±0.11} & 66.82±0.08 & 55.30±1.08 & 75.44±0.05 & 63.36±0.18 & 92.34±0.26 & 81.50±0.27 & 87.76±0.20 & 76.24±0.04 \\
HomoGCL-CS  & 78.16±0.15 & 70.04±0.15 & 72.00±0.10 & 68.54±0.09 & 76.96±0.05 & 67.02±0.16 & 92.36±0.24 & 81.64±0.38 & 87.30±0.49 & 75.80±0.03 \\
HomoGCL-C   & 74.40±0.00 & 67.30±0.00 & 64.26±0.09 & 54.60±0.00 & 81.50±0.00 & 76.32±0.40 & 92.22±0.25 & 83.95±0.16 & 87.09±0.34 & 76.00±0.01 \\
HomoGCL-P   & 76.04±0.59 & 69.14±0.47 & 70.20±0.70 & 57.80±0.63 & 74.28±0.51 & 59.94±1.69 & 92.43±0.08 & 83.75±0.23 & 86.19±0.31 & 75.24±0.07 \\
HomoGCL-Com & 78.00±1.02 & 73.28±1.85 & 64.80±1.00 & 62.64±1.79 & 70.98±2.08 & 63.44±1.99 & 92.71±0.19 & 83.02±0.71 & \textbf{88.34±0.18} & 74.83±1.12 \\ \hline
GraphLoRA(CS)-PM  & \uline{80.52±0.38} & 76.78±1.11 & 73.58±0.37 & 72.34±1.45 & \uline{82.20±0.44} & 76.96±1.37 & \textbf{92.95±0.03} & 87.99±1.05 & 87.78±0.17 & 75.79±0.32 \\
GraphLoRA(CS)-CS  & \textbf{80.60±0.41} & 76.96±1.66 & 71.78±1.34 & 71.32±1.38 & 80.32±0.62 & 75.76±1.40 & 92.41±0.29 & 87.81±1.22 & 87.43±0.41 & 75.27±0.75 \\
GraphLoRA(CS)-C   & 79.26±1.00 & 75.46±1.24 & 70.98±1.58 & 71.96±0.73 & 80.14±0.57 & 75.06±1.50 & 92.50±0.53 & 87.62±0.88 & 87.51±0.27 & 74.73±0.78 \\
GraphLoRA(CS)-P   & 76.22±2.08 & 73.54±1.02 & 62.40±3.21 & 59.14±4.92 & 79.66±0.94 & 74.26±1.27 & 92.05±0.23 & 88.00±0.85 & 87.02±0.60 & 74.81±0.89 \\
GraphLoRA(CS)-Com & 76.32±1.37 & 73.10±0.58 & 61.40±4.4  & 51.12±7.25 & 78.74±0.88 & 75.04±1.75 & 91.93±0.73 & \textbf{88.70±0.73} & 86.34±0.53 & 74.48±0.96 \\ \hline
GraphLoRA(HG)-PM  & 80.16±0.43 & \uline{77.12±0.90} & \textbf{74.22±0.50} & \uline{74.18±0.58} & 82.16±0.24 & \textbf{77.64±0.66} & \uline{92.91±0.02} & \uline{88.23±0.79} & 88.03±0.21 & \textbf{76.50±0.38} \\
GraphLoRA(HG)-CS  & 79.76±0.18 & \uline{77.12±0.79} & \uline{73.72±0.58} & \textbf{74.58±0.40} & 81.70±0.25 & 77.18±0.99 & 92.67±0.31 & 87.95±0.78 & 87.62±0.59 & 76.24±0.24 \\
GraphLoRA(HG)-C   & 79.44±0.38 & 76.58±1.05 & 73.12±0.74 & 73.26±0.44 & 81.42±0.23 & \uline{77.50±0.62} & 92.69±0.31 & 87.71±1.04 & 87.89±0.42 & 76.25±0.31 \\
GraphLoRA(HG)-P   & 76.68±1.62 & 73.00±0.96 & 65.60±1.55 & 39.66±5.48 & 73.84±1.48 & 72.70±1.33 & 90.93±0.85 & 88.19±0.72 & 86.85±0.48 & 76.26±0.26 \\
GraphLoRA(HG)-Com & 75.38±1.60 & 74.62±3.07 & 67.84±1.53 & 35.90±3.26 & 76.26±0.88 & 75.30±1.33 & 91.52±0.70 & 87.87±1.06 & 87.11±0.42 & \uline{76.27±0.22} \\ \bottomrule
\end{tabular}}
\end{table*}

\begin{table*}[!t]
\caption{Comparison of experimental results in the 5-shot setting. The notations "-PM," "-CS," "-C," "-P," and "-Com" represent the pre-training datasets PubMed, CiteSeer, Cora, Photo, and Computer, respectively. The best experimental results are highlighted in bold, while the second-best results are underscored with a underline.}
\label{tab:5shot}
\begin{tabular}{@{}cc|lllll@{}}
\toprule
\multicolumn{2}{c|}{Method}        & \multicolumn{1}{c}{PubMed} & \multicolumn{1}{c}{CiteSeer} & \multicolumn{1}{c}{Cora} & \multicolumn{1}{c}{Photo} & \multicolumn{1}{c}{Computer} \\ \midrule
\multicolumn{1}{c|}{\multirow{10}{*}{non-transfer}} & GCN           & 70.00±0.67                 & 61.92±2.12                   & 67.78±0.61               & 86.10±0.65                & 68.79±2.36                   \\
\multicolumn{1}{c|}{} & GAT           & 69.98±0.34                 & 62.42±1.43                   & 68.56±1.41               & 86.86±0.71                & 73.23±1.38                   \\
\multicolumn{1}{c|}{} & GRACE         & 69.38±0.08                 & 64.08±0.13                   & 72.82±0.44               & \uline{87.53±0.01}          & 71.64±0.13                   \\
\multicolumn{1}{c|}{} & COSTA         & 71.28±0.91                 & 67.26±2.44                   & 74.58±0.88               & 83.30±0.80                & 67.39±0.59                   \\
\multicolumn{1}{c|}{} & CCA-SSG       & 72.02±0.80                 & 67.78±1.60                   & 75.08±0.88               & 86.82±1.03                & 76.01±1.11                   \\
\multicolumn{1}{c|}{} & HomoGCL       & 70.30±0.00                 & 67.50±0.10                   & 75.44±0.00               & 83.73±0.18                & \uline{77.64±0.48}             \\
\multicolumn{1}{c|}{} & GPPT          & 69.68±0.37                 & 60.62±0.66                   & 66.80±0.53               & 85.15±0.44                & \textbf{78.16±0.68}          \\ 
\multicolumn{1}{c|}{} & GPF         & 69.46±1.75 & 62.68±1.63 & 71.76±0.88 & 87.19±0.88 & 75.30±0.49 \\
\multicolumn{1}{c|}{} & GraphPrompt & 75.23±0.93 & 69.71±1.06 & 79.90±0.74 & 86.35±0.41 & 72.43±0.27 \\
\multicolumn{1}{c|}{} & ProG        & 70.08±0.77 & 64.21±0.73 & 71.74±1.13 & 87.76±0.75 & 73.69±1.14                   \\ \midrule
\multicolumn{1}{c|}{\multirow{25}{*}{transfer}} & GRACE-P       & 69.36±0.17                 & 52.68±0.66                   & 64.38±0.08               & 85.76±0.02                & 73.45±0.12                   \\
\multicolumn{1}{c|}{} & GRACE-CS      & 61.62±0.18                 & 64.10±0.07                   & 61.48±0.04               & 85.82±0.01                & 74.40±0.03                   \\
\multicolumn{1}{c|}{} & GRACE-C       & 62.24±0.17                 & 54.44±0.09                   & 72.74±0.05               & 85.45±0.00                & 73.05±0.56                   \\
\multicolumn{1}{c|}{} & GRACE-P       & 63.28±0.18                 & 50.66±0.05                   & 54.30±0.00               & \textbf{87.55±0.04}       & 72.17±0.02                   \\
\multicolumn{1}{c|}{} & GRACE-Com     & 58.80±0.28                 & 52.24±0.05                   & 49.26±0.05               & 84.07±0.00                & 71.57±0.16                   \\ \cmidrule{2-7}
\multicolumn{1}{c|}{} & GTOT-PM  & 69.86±0.57 & 60.20±1.66 & 63.14±1.25 & 78.92±2.99 & 65.69±1.67 \\
\multicolumn{1}{c|}{} & GTOT-CS  & 68.76±1.42 & 60.98±1.83 & 61.62±0.99 & 77.75±1.59 & 65.70±1.71 \\
\multicolumn{1}{c|}{} & GTOT-C   & 69.04±0.96 & 61.52±1.03 & 61.63±0.89 & 79.64±1.40 & 66.53±1.12 \\
\multicolumn{1}{c|}{} & GTOT-P   & 68.98±0.72 & 58.42±3.27 & 61.64±1.56 & 79.88±1.43 & 65.16±1.11 \\
\multicolumn{1}{c|}{} & GTOT-Com & 68.96±1.67 & 60.50±4.51 & 61.65±0.86 & 80.16±0.76 & 69.31±0.97 \\ \cmidrule{2-7}
\multicolumn{1}{c|}{} & AdapterGNN-PM  & 69.44±0.59 & 54.94±0.72 & 60.04±2.09 & 87.03±0.41 & 74.21±0.55 \\
\multicolumn{1}{c|}{} & AdapterGNN-CS  & 61.14±1.03 & 60.84±0.41 & 58.06±1.16 & 87.58±0.44 & 72.41±0.53 \\
\multicolumn{1}{c|}{} & AdapterGNN-C   & 63.76±0.38 & 53.40±1.70 & 69.52±1.14 & 86.76±0.14 & 70.85±0.47 \\
\multicolumn{1}{c|}{} & AdapterGNN-P   & 66.56±1.13 & 51.42±1.02 & 54.84±1.76 & 86.63±0.40 & 70.96±0.61 \\
\multicolumn{1}{c|}{} & AdapterGNN-Com & 60.36±0.89 & 52.86±0.49 & 51.54±2.14 & 86.93±0.18 & 69.95±0.80 \\ \cmidrule{2-7}
\multicolumn{1}{c|}{} & GraphControl-PM & 69.06±0.67 & 55.60±1.62 & 65.20±1.13 & 85.00±0.89 & 75.53±1.16 \\
\multicolumn{1}{c|}{} & GraphControl-CS  & 63.72±0.77 & 64.38±0.35 & 63.62±1.55 & 84.64±1.08 & 74.42±1.06 \\
\multicolumn{1}{c|}{} & GraphControl-C  & 67.14±1.06 & 55.06±1.33 & 73.24±1.20 & 84.72±0.94 & 73.82±0.72 \\
\multicolumn{1}{c|}{} & GraphControl-P & 65.54±0.48 & 53.30±1.20 & 56.36±0.85 & 87.36±1.09 & 72.32±1.02 \\
\multicolumn{1}{c|}{} & GraphControl-Com  & 58.94±0.39 & 56.78±1.34 & 51.32±1.03 & 83.86±0.69 & 72.08±1.36 \\ \cmidrule{2-7}
\multicolumn{1}{c|}{} & GraphLoRA-PM  & \uline{72.56±0.96}           & \textbf{72.52±2.50}          & \textbf{77.16±0.62}      & 86.05±0.19                & 75.57±0.64                   \\
\multicolumn{1}{c|}{} & GraphLoRA-CS  & \textbf{73.54±1.66}        & \uline{71.94±2.46}             & 76.98±0.32               & 86.51±0.66                & 75.90±0.45                   \\
\multicolumn{1}{c|}{} & GraphLoRA-C   & 71.34±0.40                 & 71.78±2.11                   & \uline{77.00±0.25}         & 86.24±0.13                & 75.47±0.35                   \\
\multicolumn{1}{c|}{} & GraphLoRA-P   & 69.02±0.72                 & 71.32±1.91                   & 74.38±1.31               & 86.57±0.71                & 74.63±1.38                   \\
\multicolumn{1}{c|}{} & GraphLoRA-Com & 69.30±1.68                 & 70.72±0.64                   & 73.70±2.00               & 86.34±0.81                & 73.81±1.24                   \\ \bottomrule
\end{tabular}
\end{table*}

\begin{table*}[!t]
\caption{Ablation experiment results in the 5-shot setting. The best experimental results are highlighted in bold.}
\label{tab:ablation-5-shot}
\begin{tabular}{@{}c|ccccc@{}}
\toprule
Variants  & PubMed              & CiteSeer            & Cora                & Photo               & Computer           \\ \midrule
w/o mmd  & 72.12±0.73          & 71.18±2.38          & 76.02±0.85          & 85.83±0.18          & 75.24±0.20        \\
w/o smmd  & 69.70±0.14          & 69.00±5.13          & 74.96±1.72          & 85.26±0.35          & 74.69±0.82          \\
w/o cl   & 71.28±0.79          & 70.76±4.99          & 75.90±0.49          & \textbf{86.07±0.50} & 75.05±0.97          \\
w/o str   & 72.66±0.78          & 66.08±0.58          & 65.84±1.30          & 85.73±0.46          & 74.63±1.18          \\
w/o lrd  & 71.06±0.88          & 70.84±1.57          & 74.68±1.24          & 86.06±0.17          & 74.44±0.36          \\
w/o nfa   & 69.82±0.26          & 33.94±0.21          & 69.52±0.28          & 77.14±0.99          & 73.53±0.35          \\
w/o sktl   & 71.14±0.61          & 64.40±4.87          & 75.78±0.30          & 85.73±0.07          & 74.91±0.57          \\ \midrule
GraphLoRA & \textbf{72.56±0.96} & \textbf{72.52±2.50} & \textbf{77.16±0.62} & 86.05±0.19          & \textbf{75.57±0.64} \\ \bottomrule
\end{tabular}
\end{table*}

\section{Baselines}
\label{appendix:baseline}
\begin{itemize}
  \item \textbf{GCN}~\cite{DBLP:conf/iclr/KipfW17}: GCN is a foundational graph neural network that effectively propagates information within the graph structure by capturing the relationships among nodes and their neighboring nodes.
  \item \textbf{GAT}~\cite{DBLP:conf/iclr/VelickovicCCRLB18}: GAT is another classic graph neural network. In contrast to GCN, GAT introduces attention mechanisms, allowing each node to dynamically adjust weights based on the importance of its neighboring nodes during the representation update process.
  \item \textbf{GRACE}~\cite{DBLP:journals/corr/abs-2006-04131}: GRACE adopts the SimCLR framework~\cite{DBLP:conf/icml/ChenK0H20} and incorporates two strategies to augment the source graph. It aims to maximize the mutual information between two views by enhancing agreement at the node level.
  \item \textbf{COSTA}~\cite{DBLP:conf/kdd/ZhangZSKK22}: COSTA introduces a feature augmentation framework to perform augmentations on the hidden features, mitigating the issue of highly biased node embeddings obtained from graph enhancement. Moreover, it accelerates the speed of graph contrastive learning. 
  \item \textbf{CCA-SSG}~\cite{DBLP:conf/nips/ZhangWYWY21}: CCA-SSG proposes an innovative feature-level optimization objective based on Canonical Correlation Analysis for graph contrastive learning, presenting a conceptually simple yet effective model.
  \item \textbf{HomoGCL}~\cite{DBLP:conf/kdd/LiWXL23a}: HomoGCL enhances graph contrastive learning by leveraging the homophily of the graph. It directly utilizes the homophily of the graph by estimating the probability of neighboring nodes being positive samples via a Gaussian Mixture Model.
  \item \textbf{GPPT}~\cite{DBLP:conf/kdd/SunZHWW22}: GPPT is a graph prompt learning method that introduces a novel paradigm for graph neural network transfer learning known as "pre-train, prompt, fine-tune", designed specifically for cross-task transfer learning.
  \item \textbf{GPF}~\cite{DBLP:conf/nips/FangZYWC23}: GPF is a universal prompt-based tuning method for pre-trained GNN models, theoretically achieving the same effect as any form of prompting function.
  \item \textbf{GraphPrompt}~\cite{DBLP:conf/www/LiuY0023}: GraphPrompt introduces a unification framework by mapping different tasks to a common task template and proposes a learnable task-specific prompt vector to guide each downstream task in fully leveraging the pre-trained model.
  \item \textbf{ProG}~\cite{DBLP:conf/kdd/SunCLLG23}: ProG reformulates different-level tasks into unified ones and designs a multi-task prompting method for graph models.
  \item \textbf{GRACE$_t$}: GRACE$_t$ is a variant of GRACE that involves pre-training on the source graph using the GRACE method and fine-tuning on the target graph.
  \item \textbf{GTOT}~\cite{DBLP:conf/ijcai/ZhangXHRB22}: GTOT is an optimal transport-based fine-tuning method. It formulates graph local knowledge transfer as an optimal transport problem, preserving the local information of the fine-tuned network from pre-trained models.
  \item \textbf{AdapterGNN}~\cite{DBLP:conf/aaai/LiH024}: AdapterGNN is a parameter-efficient fine-tuning method, which freezes the pre-trained network and introduces adapters to it.
  
  \item \textbf{GraphControl}~\cite{DBLP:journals/corr/abs-2310-07365}: GraphControl is a recent research endeavor in the field of graph neural network transfer learning. Drawing inspiration from ControlNet~\cite{DBLP:conf/iccv/ZhangRA23}, it incorporates its core concepts to enhance transfer learning in graph neural networks.
\end{itemize}

\section{Additional Experimental Results}
\label{appendix:add_exp}

\subsection{Performance on Heterogeneous Graphs}
\label{appendix:hete_result}
We evaluate GraphLoRA on two heterogeneous graphs, Squirrel and Chameleon~\cite{DBLP:conf/iclr/PeiWCLY20}, with results presented in \tablename~\ref{tab:hete}. The results show that GraphLoRA does not perform the best on heterogeneous graphs, which may be due to the Structure-aware Regularization module leveraging the homophily property of homogeneous graphs, making it less suitable for heterogeneous graphs. Nonetheless, GraphLoRA ranked 6th and 3rd among 14 methods on the two datasets, respectively, indicating respectable performance.

\subsection{Performance with Different Pretraining Methods}
\label{appendix:diff_pretrain}
We evaluate GraphLoRA with different pretraining methods, including CCA-SSG and HomoGCL, with results in \tablename~\ref{tab:diff_pretrain}. GraphLoRA performs best in most cases, achieving 13.66\% better than CCA-SSG and 4.05\% better than HomoGCL on average, demonstrating its effectiveness across various pretraining methods.

\subsection{Performance in the 5-shot Setting}
\label{appendix:5_shot_result}
The results of the comparison experiment in the 5-shot setting are shown in~\tablename~\ref{tab:5shot}. The results of the ablation experiment in the 5-shot setting are shown in~\tablename~\ref{tab:ablation-5-shot}.

\end{document}